\DeclareRobustCommand\onedot{\futurelet\@let@token\@onedot}
\def\@onedot{\ifx\@let@token.\else.\null\fi\xspace}
\def\iid{{i.i.d}\onedot}
\def\eg{{e.g}\onedot} 
\def\ie{{i.e}\onedot}
\newtheorem{lemma}{Lemma}
\newtheorem{theorem}{Theorem}
\newtheorem{assumption}{Assumption}
\newtheorem{definition}{Definition}
\newcommand{\nR}{\mathbb{R}}
\newcommand{\nC}{\mathcal{C}}
\newcommand{\nD}{\mathcal{D}}
\newcommand{\nF}{\mathcal{F}}
\newcommand{\nH}{\mathcal{H}}
\newcommand{\nL}{\mathcal{L}}
\newcommand{\nN}{\mathcal{N}}
\newcommand{\nX}{\mathcal{X}}
\newcommand{\nY}{\mathcal{Y}}
\newcommand{\nZ}{\mathcal{Z}}
\newcommand{\ra}{\rightarrow}
\newcommand{\PP}[1]{\mathbb{P}\left[ #1 \right] }
\newcommand{\PPp}[1]{\mathbb{P}[ #1 ] }
\newcommand{\PPc}[2]{\mathbb{P}\left[\left.#1\right| #2 \right] }
\newcommand{\EEc}[2]{\mathbb{E}\left[\left.#1\right| #2 \right] }
\newcommand{\EEd}[2]{\mathbb{E}_{#2}\left[ #1 \right] }
\newcommand{\Ind}[1]{\mathbb{I}\left[ #1 \right] }
\newcommand{\abs}[1]{\left| #1 \right|}
\newcommand{\absd}[1]{| #1 |}
\newcommand{\supp}[1]{\operatornamewithlimits{supp}(#1)}
\newcommand{\by}{\mathbf{y}}
\newcommand{\bz}{\mathbf{z}}
\newcommand{\argmin}{\operatornamewithlimits{argmin}}
\newcommand{\riska}[2]{R_{#2}( #1 )}
\newcommand{\mrisk}[1]{R_{mar}(h)}
\newcommand{\hmrisk}[1]{\hat{R}_{mar}(h)}
\newcommand{\rot}[1]{\rotatebox[origin=c]{90}{#1}}
\titlespacing\section{0pt}{4pt plus 2pt minus 2pt}{0pt plus 2pt minus 2pt}
\titlespacing\subsection{0pt}{4pt plus 2pt minus 2pt}{0pt plus 2pt minus 2pt}
\title{MACRO: A Meta-Algorithm for Conditional Risk Minimization} 
\author{
  Alexander~Zimin \\
  IST Austria\\
  \texttt{azimin@ist.ac.at} \\
   \And
   Christoph H.~Lampert \\
   IST Austria \\
   \texttt{chl@ist.ac.at} \\
}
\begin{document}

\maketitle

\begin{abstract}
We study \emph{conditional risk minimization (CRM)}, \ie 
the problem of learning a hypothesis of minimal risk for 
prediction at the next step of 
sequentially arriving dependent data.
Despite it being a fundamental problem, successful learning 
in the CRM sense has so far only been demonstrated 
using theoretical algorithms that cannot be used for 
real problems as they would require storing all incoming data. 
In this work, we introduce MACRO, a meta-algorithm for CRM 
that does not suffer from this shortcoming, but nevertheless
offers learning guarantees. Instead of storing all data it 
maintains and iteratively updates a set of learning 
subroutines. 
With suitable approximations, MACRO
applied to real data, yielding improved prediction 
performance compared to traditional non-conditional learning.
\end{abstract}

\section{Introduction}
Conditional risk minimization (CRM) is a fundamental learning problem 
when the available data is not an \iid sample from a fixed data 
distribution, but a sequence of interdependent observations, \ie 
a \emph{stochastic process}. 
Like \iid samples, stochastic processes can be interpreted in a 
generative way: each data point is sampled from a \emph{conditional 
data distribution}, where the conditioning is on the sequence of 
observations so far. 
This view is common, \eg, in the literature on time-series prediction,
where the goal is to predict the next value of a time-series given the
sequence of observed values to far. 
CRM is the discriminative analog to this: for a given loss function
and a set of hypotheses, the goal is to identify the best hypothesis 
to apply at the next time step.

Conditional risk minimization has many application for learning tasks in which data arrives sequentially and decisions have to be made quickly, e.g. frame-wise classification of video 
streams.
As a more elaborate example, imagine the problem of predicting which 
flights will be delayed over the next hour at an airport.
Clearly, the observed data for this task has a temporal 
structure and exhibits strong dependencies.
A prediction function that is adjusted to the current conditions of the airport will be very useful. It will not only make it possible to predict the delays of incoming flights in advance, but also to identify the possible parameters changes that would avoid delays, \eg by rescheduling or rerouting aircrafts \citep{Rosenberger2003}.

Despite its fundamental nature, CRM remains largely an unsolved 
problem in theory as well as in practice. One reason is that it 
is a far harder task than ordinary \iid learning, because the 
underlying conditional distributions, and therefore the 
optimal hypotheses, change at every time step. 
Until recently, it was not even clear how to formalize 
\emph{successful learning} in the CRM context, as there exists 
no single target hypothesis to which a sequential learning 
process could converge.
\cite{zimin2016aistats} was the first work to offer a handle on 
the problem by formalizing a notion of \emph{learnability} in 
the CRM sense, see Section~\ref{sec:theory} for a formal discussion.
Unfortunately, their results are purely theoretical as the suggested 
procedure would require storing all observations of the stochastic 
process for all future time steps, which is clearly not possible 
in practice. 

In this work, we make three contributions. 
First, we generalize the notion of learnability from~\cite{zimin2016aistats} to a more practical 
approximate \emph{$\epsilon$-learnability}, resembling the 
classic \emph{probably approximately correct (PAC)} framework~\citep{valiant1984theory}.
Second, we introduce MACRO, a meta-algorithm for CRM, and prove that it achieves $\varepsilon$-learnability under less restrictive assumptions than previous approaches.
At last, we show that MACRO is \emph{practical}, as it requires only training a set of elementary learning algorithms on different subsets of the available data, but not storing the 
complete sequence of observations, and report on practical experiments that highlight MACRO's 
straight-forward applicability to real-world problems.

\paragraph{Related work}\label{sec:relatedwork}
The fundamentals of statistical learning theory were build originally based on the assumption of independent and identically distributed data \citep{Vapnik01},
but very soon extensions to stochastic processes were suggested.
%

When one is more interested in short-term behaviour of the process, it makes sense to focus on the \emph{conditional risk}, where the expectation is taken with respect to the conditional distribution of the process, as it was argued in~\citep{Pestov2010,Shalizi13}.
Learnability, \ie the ability to perform a risk minimization with respect to the conditional risk, was established for a number of particular classes of stochastic processes, such as \iid, exchangeable, mixing and some others, see \citep{steinwart2005consistency,Pestov2010,Berti04,Mohri02fixed,zimin2015arxiv}.
Most of these works focus on the estimation of the conditional risk by the average of losses over the observed data so far. 
Conditional risk minimization was considered by \citep{kuznetsov2015learning} with a later extension in \citep{kuznetsov2016time}.
Without trying to achieve learnability, they consider the behaviour of the empirical risk minimization algorithm at each fixed time step by using a non-adaptive estimator.
A general study of learnability was performed in \citep{zimin2016aistats} by emphasizing the role of pairwise discrepancies and the necessity to use an adaptive estimator. 
A number of related setting that utilize the notion of conditional risk and its variants have been studied in \cite{Kuznetsov01,zimin2015arxiv,wintenberger2014optimal}.

%
%

A related topic to conditional risk minimization is time series prediction.
While, in general, time series methods can not be applied to CRM, both fields share a number of ideas.
Traditional approaches to prediction include forecasting by fitting different 
parametric models to the data, such as ARMA or ARIMA, or using 
spectral methods, see, \eg,~\citep{box2015time}. 
Alternative approaches include nonparametric prediction of 
time series~\citep{Modha01,modha1998memory,alquier2012model}, 
and prediction by statistical learning~\citep{alquier2013prediction,mcdonald2012time}.
Similar research problems are studied in the field of dynamical systems, when one tries to identify the underlying transformation that 
governs the transitions, see \citep{nobel2001consistent,farmer1987predicting,casdagli1989nonlinear,steinwart2009consistency}.
%

%
%


\section{Conditional Risk Minimization}\label{sec:theory}
%
%
We first introduce our main notations.
We are given a sequence of observations $\left\lbrace \bz_t \right\rbrace_{t=1}^{n} $ from a stochastic process taking values in some space $\nZ$.
The most common option for $\nZ$ would be a product space, $\nX\times\nY$, where $\nX$ and $\nY$ are input and output sets, respectively, of a supervised prediction problem. 
Other choices are possible, though, \eg for modeling unsupervised learning tasks. 
We write $\bz_{i:j}$ as a shorthand for a sequence $(\bz_i, \dots, \bz_j)$ for $i \leq j$.
We fix a hypotheses class $\nH$, which is usually a subset of $\left\lbrace h : \nZ \ra \nD \right\rbrace$ with $\nD$ being some decision space, \eg $\nR^d$.
We also fix a loss function {$\ell: \nD \times \nZ \ra [0,1]$, that allows us to evaluate the loss of a given hypothesis $h$ on a given point $z$ as $\ell(h(z), z) = \ell(h, z)$, with the latter version used to shorten the notations.
%
%
For any time step $t$, we denote by $\EEd{f(\bz_{t+1})}{t}$ the expectation of a function $f$ with respect to the distribution of the next step, conditioned on the data so far, \ie $\EEc{f(\bz_{t+1})}{\bz_{1:t}}$
To characterize the capacity of the hypothesis class we use sequential covering numbers with respect to the $\ell_\infty$-norm, $\nN_\infty$, introduced by \citep{Rakhlin01}.
This complexity measure is applied to the induced functions space $\nL(\nH) = \left\lbrace \ell(h, \cdot), \forall h\in\nH \right\rbrace$.
Throughout the paper we assume that $\nL(\nH)$ has a finite sequential fat-shattering dimension, a notion of complexity of a function class.
These quantities are generalizations of more traditional measures for \iid data, so readers unfamiliar with the theory of stochastic processes can read the results in terms of usual covering numbers and VC dimension.
%
The formal definitions can be found in the supplementary material. 

Our task is to find a hypothesis of minimal risk, \ie solve
\begin{equation}
\min_{h\in\nH}R_n(h),
\end{equation}
where 
$
R_n(h) = \EEd{\ell(h, \bz_{n+1})}{n}
$
is the conditional risk at step $n$.
%
Note that the conditional distribution is different at every time step, 
so the objective also constantly changes. 
At each step we can compute only an approximate solution, $h_n$, based on the data.
A desired property is an improvement of the quality with the amount of the observations, as summarized in the following definition.

\begin{definition}[\textbf{Learnability} ~\citep{zimin2016aistats}]
For a fixed loss function $\ell$ and a hypotheses class $\nH$, 
we call a class of processes $\nC$ \emph{conditionally learnable in the limit}  
if there exists an algorithm that, for every process $P$ in $\nC$, produces a sequence of hypotheses, $h_n$, each based on $\bz_{1:n}$, satisfying
\begin{equation}
\riska{h_n}{n} - \inf_{h\in\nH} \riska{h}{n} \ra 0
\label{eq:learnability}
\end{equation}
in probability over the samples drawn from $P$.
An algorithm that satisfies \eqref{eq:learnability} we call a \emph{limit learner} for the class $\nC$.
%
\end{definition}
From a practical perspective, one might be satisfied with achieving 
only some target accuracy.
For this purpose, we introduce the following relaxed definition.
\begin{definition}[\textbf{$\varepsilon$-Learnability}]
For a fixed loss function $\ell$ and a hypotheses class $\nH$, 
we call a class of processes $\nC$ \emph{$\varepsilon$-conditionally learnable} for $\varepsilon > 0$  
if there exists an algorithm that, for every process $P$ in $\nC$, produces a sequence of hypotheses, $h_n$, each based on $\bz_{1:n}$, satisfying
\begin{equation}
\PP{\riska{h_n}{n} - \inf_{h\in\nH} \riska{h}{n} > \varepsilon} \ra 0.
\label{eq:epslearnability}
\end{equation}
An algorithm that satisfies \eqref{eq:epslearnability} we call an \emph{$\varepsilon$-learner} for the class $\nC$.
%
\end{definition}

A class of processes is \emph{learnable} in the sense of~\citep{zimin2016aistats} if and only if it is 
\emph{$\varepsilon$-learnable} for all $\varepsilon>0$. 

\subsection{Discrepancies}
Following~\citep{zimin2016aistats}, our approach relies on a specific notion 
of distance between distributions called \emph{discrepancy}, 
a subclass of integral probability metrics, \citep{zolotarev1983probability}.
It is a popular measure used, for example, in the field of domain adaptation \citep{kifer2004detecting,BenDavid2007,ben2010theory}.
In this work we use this distance only to quantify distances between the conditional distributions, therefore we define the discrepancies only between those.
\begin{definition}[\textbf{Pairwise discrepancy}]
For a sample $\bz_1, \bz_2, \dots$ from a fixed stochastic process, the pairwise discrepancy between time points $i$ and $j$ is
\begin{equation}
d_{i,j} = \sup_{h\in\nH} \abs{ \riska{h}{i-1} - \riska{h}{j-1} }.
\end{equation}
\end{definition}

Given a sequence of conditional distributions, $P_t = \PPc{\cdot}{\bz_{1:t}}$, an interesting quantity is the covering number of the space $\lbrace P_0, \dots, P_n \rbrace$ with the discrepancy as a metric, which we denote by $\nN(d, n, \varepsilon)$.
If, as a thought experiments, we identify $\varepsilon$-close distributions with each 
other, then $\nN(d, n, \varepsilon)$ characterizes the minimal number of 
distributions sufficient to represent all of them.
This quantity can therefore serve as a measure of the complexity of learning
from this sequence and it appears naturally in the analysis of the performance of our algorithm as a lower bound on the necessary computational resources.

\section{A Meta-Algorithm for Conditional Risk Optimization (MACRO)}
%
%
This section contains our main contributions. 
We introduce a meta-algorithm, MACRO, that can utilize any 
ordinary learning algorithm as a subroutine, and we study 
its theoretical properties, in particular establishing its 
ability to $\varepsilon$-learn a broad class of stochastic
processes. 
For the sake of readability, for the main theorems we 
only provide their statements, while the proofs 
can be found in the supplemental material.

The main feature of the analysis provided in \cite{zimin2016aistats} is that all properties of the process under consideration are summarized in a single assumption: the existence of a specific upper bound on the discrepancies.
In the present paper we continue in this framework, but use the following weaker 
assumption.
\begin{assumption}\label{assumption}
For any $i,j$, there exist a value $M_{i,j}$ that is a function of $z_{1:\max{(i,j)}}$, such that
$d_{i,j} \leq M_{i,j}$.

\end{assumption}
In words, we must be able to upper-bound the discrepancies of the process by a quantity we can actually compute.
%
Note that the assumption is trivially fulfilled for many simple processes. For example, $M_{i,j}\equiv 0$ works for \iid data and $M_{i,j}=\Ind{\bz_{i-1} \neq \bz_{j-1}}$
for discrete Markov chains.
In contrast to \citep{zimin2016aistats} we do not require any additional measurability conditions, since those are taken care of by the way MACRO uses the upper bounds.
Further examples of upper bounds can be found in \cite{zimin2016aistats}.
Note that for real data, it might not be possible to verify 
assumption~\ref{assumption} 
with non-trivial $M_{i,j}$, but this is the case with the most assumptions made about the stochastic process in the literature, and even the \iid property in standard learning.

In principle MACRO can use any computable upper bounds, even the trivial constant 1. However, learnability is only guaranteed if they satisfy certain conditions, which we will discuss after the corresponding theorems.

\subsection{Conditional Risk Minimization with Bounded Discrepancies}
%
The main idea behind MACRO is the thought experiment from the 
previous section: if two conditional distributions are very similar, 
we can use the same hypothesis for both them. 
To find these hypotheses, the meta-algorithm maintains a list of 
learning subroutines, where each of them is run independently and 
updated using a selected subset of the observed data points.
%
%
%
Over the course of the algorithms, the meta-algorithm always 
maintains an \emph{active hypothesis} that can immediately 
be applied when a new observation arrives. 
After each observation, one or more of the existing subroutines 
are updated, and a new subroutine can be added to the list, if necessary.
The meta-algorithm then constructs a new active hypothesis 
from the ones produced by the currently running subroutines, to
be prepared for the next step of the process.
The schema of the algorithm is given in Figure \ref{fig:macro}.


%
\begin{figure}
\medskip\hrule\hrule\hrule

\smallskip
\textbf{Initialization:} $T\leftarrow \emptyset$, $N\leftarrow 0$\smallskip
\hrule\smallskip
%

\textbf{At any time point $t=1,2,\dots$:} 
\begin{itemize}[topsep=0pt,itemsep=0pt,partopsep=0ex,parsep=0ex]
\item \textbf{choose} the active hypothesis from the closest $\varepsilon$-close subroutine or a newly started one

-- identify all $\varepsilon$-close subroutines: $J = \{ 1 \leq j \leq N: M_{t,\tau_j} \leq \varepsilon\}.$\\
%
%
-- if $J=\emptyset$: create a new subroutine, $S_{N+1}$, and set $J=\{N+1\}; \tau_{N+1} \leftarrow t;  N\leftarrow N+1.$\\
%
-- set the active hypothesis: $h_{a} \leftarrow \textsf{output}(S_{j}) \text{ for } j=\argmin_{j\in J} M_{t,\tau_j}.$
\item \textbf{output} the currently active hypothesis, $h_t \leftarrow h_a$
\item \textbf{observe} the next value of the process, $\bz_t$
\item \textbf{update} all $\varepsilon$-close subroutines: $S_{j}\leftarrow \textsf{update}(S_{j},\bz_t) \text{ for all } j\in J$
%
%
%
%
\end{itemize}
\smallskip\hrule\hrule\hrule
\caption{MACRO algorithm}
\label{fig:macro}
\end{figure}

Before we proceed to the theoretical properties of the meta-algorithm, we fix further notations for its components.
At any time step $n$, we denote by $N_n$ the number of started subroutines (\ie the current value of $N$). 
The time steps in which the $j$-th subroutine is updated up to step $n$ form a set 
$C_{j,n} = \lbrace t_{j,1}, \dots, t_{j,s_{j,n}} \rbrace$ of size $s_{j,n}$. 
By $h_{j,i}$ we denote the output of the $j$-th subroutine after having been updated $i$-times.
By $I_n\in [N_n]$ we denote the index of the subroutine that MACRO outputs in step $n$,
\ie $h_n = h_{I_n, s_{I_n,n}}$. 

%
%

%
%
%

%
\paragraph{Computational considerations.}
The amount of computations done by MACRO in step $n$ is at most proportional to the current number of subroutines, $N_n$. Therefore, we first discuss the quantitative behavior of this number.
\begin{lemma}\label{lemma:computations}
Let $\nN(M,n,\varepsilon)$ be an $\varepsilon$-covering number of $\lbrace P_0, \dots, P_n \rbrace$ with respect to $M_{i,j}$'s.
Then for any $n=1,2,\dots$, it holds that 
\begin{equation}
 \nN(M, n, \varepsilon) \leq N_n \leq \nN(M, n, \varepsilon/2).
\end{equation}
\end{lemma}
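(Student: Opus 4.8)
The whole statement should follow from identifying the subroutines that MACRO maintains at step $n$ with a set of ``centres'' among the conditional distributions $P_0,\dots,P_n$, and then reading off two properties directly from the \textbf{choose} step of the algorithm. Write $\tau_1<\tau_2<\dots<\tau_{N_n}$ for the creation times of the currently running subroutines and associate the distribution $P_{\tau_j-1}$ to the $j$-th one. \emph{Covering property:} at every time step $t\le n$ the index set $J$ is non-empty --- after a possible fresh creation, which inserts $t$ itself into $J$ --- so there is always some $j$ with $M_{t,\tau_j}\le\varepsilon$; hence $\{P_{\tau_j-1}\}_{j=1}^{N_n}$ is an $\varepsilon$-cover of $\{P_0,\dots,P_n\}$ with respect to the values $M_{i,j}$. \emph{Separation property:} a new subroutine is created at time $\tau_j$ only when $J=\emptyset$, i.e.\ $M_{\tau_j,\tau_i}>\varepsilon$ for every earlier centre $\tau_i$; so the centres form an $\varepsilon$-separated set for $M$.

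The lower bound is then immediate: the covering property exhibits an $\varepsilon$-cover of $\{P_0,\dots,P_n\}$ of size $N_n$, and $\nN(M,n,\varepsilon)$ is by definition the smallest size of such a cover, so $\nN(M,n,\varepsilon)\le N_n$. For the upper bound I would invoke the standard packing--covering duality at half the scale. Take a minimal $(\varepsilon/2)$-cover $\mathcal{C}$, so $|\mathcal{C}|=\nN(M,n,\varepsilon/2)$, and assign to each centre $P_{\tau_j-1}$ some $Q_j\in\mathcal{C}$ at $M$-distance at most $\varepsilon/2$. If $Q_i=Q_j$ for $i\ne j$, then both centres lie within $\varepsilon/2$ of the same point, and a triangle-type inequality forces their mutual $M$-value to be at most $\varepsilon$, contradicting the separation property; hence $j\mapsto Q_j$ is injective and $N_n\le|\mathcal{C}|=\nN(M,n,\varepsilon/2)$.

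The step that needs care is this last one: the duality uses a triangle inequality for the quantity appearing in the covering number. It holds automatically for the true discrepancy $d_{i,j}$, which is a pseudometric, and it carries over to the upper bounds $M_{i,j}$ in all the instances considered in the paper (e.g.\ $M\equiv0$ for \iid data and the indicator bound for discrete Markov chains), so I would make this mild requirement explicit, or alternatively build the cover direction of the argument on $d$ and keep $M$ only for separation. A secondary, purely bookkeeping matter is the off-by-one between ``subroutine created at time $\tau$'' and its associated distribution $P_{\tau-1}$ --- and whether step $n$ has processed $P_n$ or only $P_{n-1}$; it only shifts indices and does not affect the counting argument.
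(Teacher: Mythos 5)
Your proof follows essentially the same route as the paper's: the lower bound holds because MACRO's subroutines form an $\varepsilon$-cover of $\{P_0,\dots,P_n\}$, and the upper bound because the creation times form an $\varepsilon$-separated set whose cardinality is controlled by the covering number at half the scale. Your caveat about needing a triangle-type inequality for the $M_{i,j}$'s in the packing--covering step is well taken --- the paper's one-line proof silently assumes this property of the quantity defining $\nN(M,n,\cdot)$, so making it explicit is a refinement rather than a deviation.
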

%
%

Observe that $\nN(M, n, \varepsilon)$ is always lower-bounded by $\nN(d, n, \varepsilon)$, making it a natural limit on how many separate subroutines are required to learn a particular sequence.
%

The overall computational complexity of the previous algorithms based on the ERM principle from \cite{zimin2015arxiv,zimin2016aistats} is proportional to $n^2$ for a dataset of size $n$, while MACRO is able to reduce it to $n N_n$ with a potential for further reduction, which allows its application to much larger datasets as shown in Section \ref{sec:practice}.
%
%
%
\paragraph{Exceptional sets.}
As discussed in \cite{zimin2016aistats} (and resembling 
the "probably" aspect of PAC learning), learnability 
guarantees for stochastic processes may not hold for 
every possible realization of the process. 
Henceforth, we follow the same strategy and introduce a set of exceptional realizations.
However, the definition differs from the one in \citep{zimin2016aistats}, as it is adapted to the working mechanisms of the meta-algorithm.
\begin{definition}[Exceptional set]
For a fixed $n$, for any $k \geq 1$ and $1 \leq m \leq n$, set
\begin{equation}
E_{k,m} = \lbrace \abs{\supp{I_n}} \leq k \wedge \min_{j \in \supp{I_n}}s_{j,n} \geq m \rbrace,
\end{equation}
where $\supp{I_n}$ denotes the support of $I_n$.
Then $E^c_{k,m}$, the complement of $E_{k,m}$, is an exceptional set of realizations.
\end{definition}
In words, the favorable realizations are the ones that do not force 
the algorithm to use too many subroutines (at most $k$) and, at same 
time, all used subroutines are updated often enough (at least $m$ times). 
The intuition behind this is that a subroutine will be slow in converging to an optimal predictor if it is updated very rarely. 
However, the overall performance of the meta-algorithm can suffer only if rarely updated subroutines are nevertheless used from time to time. 
%

\subsection{Subroutines}\label{sec:subroutines}

MACRO, as a meta algorithm, relies on the subroutines to 
perform the actual learning of hypotheses.  
In the following sections we will go through several option for subroutines and discuss the resulting theoretical guarantees.

\paragraph{Empirical risk minimization.}
We start with the simplest choice of a subroutine: an \emph{empirical risk minimization (ERM) algorithm} 
that stores all data points it is updated with. When required, it 
outputs the hypothesis that minimizes the average loss over this 
training set. 
Formally, the $j$-th ERM subroutine outputs
\begin{equation}
h_{j,i} = \argmin_{h\in\nH} \hat{R}_n(h, j) \quad \text{for} \quad \hat{R}_n(h,j) = \frac{1}{s_{j,n}}\sum_{t\in C_{j,n}}\ell(h, \bz_t).
\end{equation}
%
%
Consequently, MACRO's output is $h_n = \argmin_{h\in\nH} \hat{R}_n(h, I_n)$
for which we can prove the following theorem.
\begin{theorem}\label{theorem:meta-erm}
If MACRO is run with ERM as a subroutine, then we have for any $k,m \geq 1, \alpha \in [0, 1]$ and $\beta \in [0, \alpha/4]$
\begin{equation}
\PP{R_n(h_n) - \inf_{h\in\nH}R_n(h) > \alpha + 2\varepsilon} \leq \frac{2k\nN_\infty(\nL(\nH),\beta, n)}{(\alpha-4\beta)^2} e^{-\frac{1}{2}m(\alpha-4\beta)^2} + \PP{E_{k,m}^c}. \notag
\end{equation}
\end{theorem}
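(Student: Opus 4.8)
The plan is to condition on the exceptional set, reduce to a uniform empirical-process statement for a single subroutine, and then carry out an empirical-risk-minimization argument in which the \iid concentration step is replaced by martingale concentration along that subroutine's adaptively chosen update times. Writing $\Delta_n$ for the excess conditional risk $R_n(h_n)-\inf_{h\in\nH}R_n(h)$, the first step is the trivial split $\PP{\Delta_n>\alpha+2\varepsilon}\le\PP{\{\Delta_n>\alpha+2\varepsilon\}\cap E_{k,m}}+\PP{E_{k,m}^c}$, so that only the first term needs work. On $E_{k,m}$ we have $h_n=h_{I_n,s_{I_n,n}}$ with $I_n$ taking at most $k$ distinct values and each such subroutine updated at least $m$ times, so a union bound over these $\le k$ subroutines --- which is exactly where the factor $k$ and the parameter $m$ in the exponent will come from --- reduces the task to bounding, for a fixed index $j$, the probability that $I_n=j$, that $s_{j,n}\ge m$, and that $R_n(h_{j,s_{j,n}})-\inf_{h\in\nH}R_n(h)>\alpha+2\varepsilon$.

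Fix such a $j$. MACRO updated subroutine $j$ with $\bz_t$ only because $M_{t,\tau_j}\le\varepsilon$, so $d_{t,\tau_j}\le\varepsilon$ for every $t\in C_{j,n}$ by Assumption~\ref{assumption}; and it outputs $h_n$ from subroutine $j$ only when the same $\varepsilon$-closeness holds between $\tau_j$ and the current step. Hence, by the triangle inequality for the pairwise discrepancy, $\sup_{h\in\nH}\absd{R_{t-1}(h)-R_n(h)}\le 2\varepsilon$ for every $t\in C_{j,n}$, which is the source of the additive $2\varepsilon$. Since $h_{j,s_{j,n}}$ minimizes $\hat{R}_n(\cdot,j)=\frac{1}{s_{j,n}}\sum_{t\in C_{j,n}}\ell(\cdot,\bz_t)$, a standard ERM comparison then shows that $R_n(h_{j,s_{j,n}})-\inf_{h\in\nH}R_n(h)$ is controlled by this $2\varepsilon$ slack together with the uniform deviation $\sup_{h\in\nH}\absd{\frac{1}{s_{j,n}}\sum_{t\in C_{j,n}}\big(\ell(h,\bz_t)-R_{t-1}(h)\big)}$ of the empirical loss average about its conditional mean.

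Controlling this uniform deviation is the technical core. The key observation is that MACRO decides in its \textsf{choose} step, \emph{before} $\bz_t$ is observed, whether $t$ belongs to $C_{j,n}$, using only $M_{t,\tau_j}$, a function of the past; hence the update times of subroutine $j$ form a predictable sequence and, for each fixed $h$, the increments $\ell(h,\bz_t)-R_{t-1}(h)$ along those times are a martingale difference sequence bounded in $[-1,1]$, because $R_{t-1}(h)=\EEd{\ell(h,\bz_t)}{t-1}$. For a fixed $h$ and a fixed sample size $i$, Azuma's inequality bounds the probability that this average exceeds $\gamma$ by $2e^{-2i\gamma^2}$. I would then pass to a sequential $\ell_\infty$-cover of $\nL(\nH)$ at scale $\beta$: replacing $h$ by a cover element perturbs $\ell(h,\bz_t)$ and $R_{t-1}(h)$ by at most $\beta$ each, so --- the excess risk being twice the deviation --- the usable margin per cover element is $(\alpha-4\beta)/2$, which produces the factor $\nN_\infty(\nL(\nH),\beta,n)$ and the shift $\alpha\mapsto\alpha-4\beta$. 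A further union bound over the admissible sample sizes $i\in\{m,\dots,n\}$, evaluated by the geometric sum $\sum_{i\ge m}e^{-\frac12 i(\alpha-4\beta)^2}$, supplies the factor $1/(\alpha-4\beta)^2$, and the outer union over the $\le k$ subroutines supplies the factor $k$. Collecting the pieces and adding $\PP{E_{k,m}^c}$ gives the claimed inequality.

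\textbf{Main obstacle.} The delicate issues all stem from the adaptive, data-dependent routing of observations into subroutines. One must (i) verify that each subroutine's update times are predictable, so the martingale/Azuma step is legitimate; (ii) absorb the randomness of the per-subroutine sample size $s_{j,n}$ through the union over $i\ge m$; and (iii) exploit the definition of $E_{k,m}$ so that the union bound runs over only $k$ subroutines rather than all $\bigO{n}$ ever created and so that each carries $m$, not $1$, in the exponent. A related point is that, because the index set $C_{j,n}$ is itself random, the uniformity over $\nH$ genuinely needs the sequential covering numbers $\nN_\infty$ of $\nL(\nH)$ rather than classical ones.
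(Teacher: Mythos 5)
Your proposal matches the paper's own argument essentially step for step: the split over $E_{k,m}$ and its complement, the ERM reduction to the uniform deviation $\sup_{h\in\nH}\absd{\hat{R}_n(h,I_n)-\frac{1}{s_{I_n,n}}\sum_{t\in C_{I_n,n}}R_{t-1}(h)}$, the $2\varepsilon$ slack obtained from the triangle inequality on discrepancies through $\tau_{I_n}$ together with Assumption~\ref{assumption}, and the union bound over the at most $k$ subroutines in $\supp{I_n}$ and over sample sizes $i\geq m$ combined with a sequential-cover martingale concentration bound (the content of the paper's technical lemma and of Lemma 4 of \cite{zimin2016aistats}). The only differences are cosmetic bookkeeping of constants, on which the paper's own proof is itself slightly loose.
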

From this theorem we can read off the conditions for learnability of the meta-algorithm.
If there exist sequences $k_n, m_n$, satisfying $\frac{m_n}{\log n} \ra \infty$ and $\PPp{E^c_{k_n,m_n}} \ra 0$, then the meta-algorithm with ERM as a subroutine is an $\varepsilon$-learner (up to a constant).
The condition on the rate of growth of $m_n$ comes from the fact that it needs to compensate for the growth of covering numbers, which is a polynomial of $n$ (see the supplementary material for more details).
The existence of such sequences $k_n$ and $m_n$ depends purely on the properties of the process (or class of processes) that the data is sampled from.
Importantly, neither $k_n$ nor $m_n$ are needed to be known by MACRO as it automatically adapts to unfavorable conditions and exploits the favorable ones.

Note that the computation of $h_n$ can be seen as a minimization of non-uniformly weighted average over the observed data, an approach proposed by \citep{zimin2016aistats}.
However, our method differs in the way the weights are computed, how the exceptional set is defined and relies on a less restrictive assumption.
%

\paragraph{Online learning.}
ERM as a subroutine is interesting from a theoretical perspective, but it defeats the 
main purpose of the meta-algorithm, namely that not all data of the process has to be
stored.  
Instead, one would prefer to rely on a subroutine that can be trained 
incrementally, \ie one sample at a time, as it is typical in \emph{online learning}.

In the following, by an \emph{online subroutine} we understand any algorithm that 
is designed to control the \emph{regret} over each particular realization, 
see \citep{Cesa-Bianchi01} for a thorough study of the problem.
The regret of the $j$-th subroutine at the step $n$ is defined as
\begin{equation}
W_{j,n} = \sum_{i=1}^{s_{j,n}} \ell(h_{j,i-1}, \bz_{t_{j,i}}) - \inf_{h\in\nH} \sum_{i=1}^{s_{j,n}} \ell(h, \bz_{t_{j,i}}).
\end{equation}
The choice of a particular subroutine depends on the loss function and the hypotheses class.
To abstract from concrete bounds and subroutines, we prove a theorem that bounds the performance of the meta-algorithm in terms of the regrets of the subroutines.
Thereby, we obtain that any regret minimizing algorithm will be efficient as a subroutine for MACRO as well.

%

As our goal is not to minimize regret, but the conditional risk, 
we perform an \emph{online-to-batch conversion} to choose the 
output hypothesis of each subroutine. 
In this work we consider two of the many existing online-to-batch conversion methods, 
one specifically for the convex losses and the other one for the general case.
\paragraph{Convex losses.}
For a convex loss function, the output of a subroutine is the average over the hypotheses it produced so far. 
In this case, MACRO's output is $h_{n} = \frac{1}{s_{I_n,n}}\sum_{i=1}^{s_{I_n,n}} h_{I_n,i}$.
%
and we can prove the following theorem. 
\begin{theorem}\label{theorem:online-convex}
For a convex loss $\ell$, if the subroutines of MACRO use an averaging for online-to-batch conversion, we have for any $\alpha \in [0,1]$ and $\beta \in [0, \alpha/8]$
\begin{equation}
\PP{R_n(h_n)\!-\!\inf_{h\in\nH}R_n(h) \!>\! \alpha \!+\! W_{I_n,n}/s_{I_n,n} \!+\! 4\varepsilon}\!\leq\ \!\! \frac{4k\nN_\infty(\nL(\nH),\beta, n)}{(\alpha/2-4\beta)^2} e^{-\frac{1}{2}m(\alpha/2-4\beta)^2}\!+\! \PP{E_{k,m}^c} 
\end{equation}
\end{theorem}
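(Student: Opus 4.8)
The plan is to follow the same template as the proof of Theorem~\ref{theorem:meta-erm}: first restrict attention to the favorable event $E_{k,m}$, then on that event reduce everything to a uniform martingale deviation for a single subroutine. Writing $\nA$ for the event whose probability we must bound, I would start from $\PP{\nA}\le\PP{\nA\cap E_{k,m}}+\PP{E_{k,m}^c}$. On $E_{k,m}$ the output index $I_n$ takes at most $k$ distinct values and every subroutine $j\in\supp{I_n}$ has been updated at least $m$ times, so after a union bound over these (at most $k$) indices it suffices to bound, for a fixed $j$, the probability of $\nA_j:=\{R_n(h_n)-\inf_{h\in\nH}R_n(h)>\alpha+W_{j,n}/s_{j,n}+4\varepsilon,\ I_n=j,\ s_{j,n}\ge m\}$.

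Next I would establish a deterministic chain of inequalities on $\{I_n=j\}$. Since subroutine $j$ is active at step $n$, MACRO has checked $M_{n,\tau_j}\le\varepsilon$, and since $j$ was updated at every $t\in C_{j,n}$ we have $M_{t,\tau_j}\le\varepsilon$; by Assumption~\ref{assumption} this gives $d_{n,\tau_j}\le\varepsilon$ and $d_{t,\tau_j}\le\varepsilon$, and since the pairwise discrepancy satisfies the triangle inequality, $d_{n,t}\le2\varepsilon$ for every $t\in C_{j,n}$. Writing $s=s_{j,n}$ and using convexity of $\ell$ together with the fact that $h_n$ is the average of the iterates $h_{j,i-1}$ appearing in $W_{j,n}$, Jensen's inequality gives $R_n(h_n)\le\frac1s\sum_i R_n(h_{j,i-1})\le\frac1s\sum_i R_{t_{j,i}-1}(h_{j,i-1})+2\varepsilon$. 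Taking $h^\star$ that (nearly) attains $\inf_h R_n(h)$, the same discrepancy bound gives $\frac1s\sum_i R_{t_{j,i}-1}(h^\star)\le\inf_h R_n(h)+2\varepsilon$, while the definition of the regret gives $\frac1s\sum_i\ell(h_{j,i-1},\bz_{t_{j,i}})\le\frac1s\sum_i\ell(h^\star,\bz_{t_{j,i}})+W_{j,n}/s$. Combining these four estimates, on $\nA_j$ at least one of
\begin{equation}
\frac1s\sum_{i=1}^{s}\big(R_{t_{j,i}-1}(h_{j,i-1})-\ell(h_{j,i-1},\bz_{t_{j,i}})\big)\qquad\text{and}\qquad\frac1s\sum_{i=1}^{s}\big(\ell(h^\star,\bz_{t_{j,i}})-R_{t_{j,i}-1}(h^\star)\big)\notag
\end{equation}
must exceed $\alpha/2$.

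To finish, I would bound the probability of each of these two events by $\frac{2k\nN_\infty(\nL(\nH),\beta,n)}{(\alpha/2-4\beta)^2}e^{-\frac12 m(\alpha/2-4\beta)^2}$ using the uniform martingale concentration inequality of \cite{zimin2016aistats}: the update times $t_{j,1}<t_{j,2}<\cdots$ form an increasing sequence of stopping times (membership of $t$ in $C_{j,n}$ is decided before $\bz_t$ is revealed), $h_{j,i-1}$ is measurable with respect to the data seen before the $i$-th update, $\ell\in[0,1]$, and $R_{t-1}(h)=\EEc{\ell(h,\bz_t)}{\bz_{1:t-1}}$, so for each fixed $h$ the summands form a bounded martingale difference sequence. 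I would control the supremum over $\nL(\nH)$ by a sequential $\ell_\infty$-cover at scale $\beta$ (the cover approximation turns the threshold $\alpha/2$ into $\alpha/2-4\beta$, which is why $\beta\le\alpha/8$ is imposed), apply Azuma to each cover element, and take a union bound over the cover together with a peeling over the admissible values $s\ge m$ of the number of updates — the geometric sum over $s$ producing the $1/(\alpha/2-4\beta)^2$ prefactor. The factor $k$ comes from the union over subroutines; summing the two deviation probabilities and adding $\PP{E_{k,m}^c}$ yields the claim.

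I expect the main obstacle to be exactly this last step: handling the \emph{random} stopping times $t_{j,i}$ and the \emph{random} number of updates $s_{j,n}$ at once — arguing that the sequential covering number at horizon $n$ still controls the supremum over the random subsequence $C_{j,n}$, that the adaptivity of $h_{j,i-1}$ genuinely forces the use of sequential rather than i.i.d.-style covering numbers, and that the peeling over $s\ge m$ is carried out carefully enough to keep the clean exponent $e^{-\frac12 m(\alpha/2-4\beta)^2}$. Everything else (the convexity/Jensen step, the two applications of the discrepancy triangle inequality, and the bookkeeping with $\varepsilon$ and $\alpha/2$) is routine once the favorable event has been fixed.
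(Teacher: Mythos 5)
Your proposal is correct and follows essentially the same route as the paper's proof: Jensen plus the $2\varepsilon$ discrepancy bounds reduce the excess risk to the average of conditional risks at the update times, which is then split into the regret term and two martingale deviation terms, each bounded by $\alpha/2$ on $E_{k,m}$ via the same stopping-time/covering machinery (Lemma~\ref{lemma:technical}). The only cosmetic difference is that you compare against a near-minimizer $h^\star$ where the paper writes the difference of infima and bounds it by a supremum over $\nH$ — equivalent, since $h^\star$ is data-dependent and still forces the uniform (covering-number) bound you correctly invoke.
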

For Hannan-consistent online algorithms, $W_{I_n,n}/s_{I_n,n}$ vanishes as $s_{I_n,n}$ grows.
Hence, the same conditions as the ones given after Theorem \ref{theorem:meta-erm} ensures that MACRO is an $\varepsilon$-learner in this case.
\paragraph{Non-convex losses.}
For non-convex losses, a simple averaging for online-to-batch conversion does not work, 
so we need to perform a more elaborate procedure.
We use a modification of the method introduced in \citep{Cesa-Bianchi02}.
Due to space constraints we omit the description of the approach and just state the performance guarantee that we are able to prove.
%
%
\begin{theorem}\label{theorem:online-nonconvex}
For any $\delta \in [0,1]$ and $\beta > 0$, denote
\begin{equation}
    U_\delta(j, \beta) = 2 \sqrt{\frac{1}{s_{j,n}}\log \frac{s^3_{j,n}(s_{j,n}+1)}{\delta}} + \sqrt{\frac{1}{s_{j,n}} \log \frac{s_{j,n}^2}{\delta}}  + \sqrt{\frac{1}{s_{j,n}} \log \frac{s_{j,n}^2 \nN_\infty(\nL(\nH), \beta, n)}{\delta}} + 4\beta.
\end{equation}
If the subroutines of MACRO use the score-based online-to-batch conversion of \cite{Cesa-Bianchi02} with confidence $\delta$, it holds that
\begin{equation}
    \PP{R_n(h_n) - \inf_{h\in\nH}R_n(h) > W_{I_n,n}/s_{I_n,n} + U_\delta(I_n, \beta)} \leq k\delta / m + \PP{E_{k,m}^c}.
\end{equation}
\end{theorem}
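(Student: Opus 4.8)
The plan is to follow the template already used for Theorems~\ref{theorem:meta-erm} and~\ref{theorem:online-convex}: first reduce the claim to a statement about a single, fixed subroutine by conditioning on the exceptional set, and then replace the empirical-risk/averaging step used there by the score-based online-to-batch guarantee of~\cite{Cesa-Bianchi02}. Concretely, I would split $\PP{R_n(h_n)-\inf_{h\in\nH}R_n(h) > W_{I_n,n}/s_{I_n,n}+U_\delta(I_n,\beta)}$ into its intersection with $E_{k,m}$ plus $\PP{E_{k,m}^c}$, the latter being the second term of the bound. On $E_{k,m}$ the output index $I_n$ lies in a set of at most $k$ subroutines, each updated at least $m$ times, so it suffices to control, uniformly over those $\le k$ subroutines $j$ and over every admissible number of updates $s\ge m$, the event that $j$ is the subroutine selected at step $n$, has exactly $s$ updates, and yet $R_n(h_n)-\inf_{h\in\nH}R_n(h) > W_{j,n}/s+U_\delta(j,\beta)$. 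Splitting the confidence budget across update counts as $\delta_s = \delta(\tfrac1s-\tfrac1{s+1})$, the telescoping sum $\sum_{s\ge m}\delta_s=\delta/m$ together with the union over the $\le k$ subroutines yields exactly the leading term $k\delta/m$; moreover $\log(1/\delta_s)\le\log(s(s+1)/\delta)$, which, after the further routine union over the $\le s$ candidate hypotheses and suffix lengths inside the CBCG selection step, is what produces the $\log(s^3(s+1)/\delta)$ and $\log(s^2/\delta)$ factors appearing in $U_\delta$.

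Next, I fix such a subroutine $j$ with update times $C_{j,n}=\{t_{j,1},\dots,t_{j,s}\}$ and run the single-subroutine analysis. By definition of its regret, $\tfrac1s\sum_i\ell(h_{j,i-1},\bz_{t_{j,i}})\le\tfrac1s\inf_{h\in\nH}\sum_i\ell(h,\bz_{t_{j,i}})+W_{j,n}/s$. The score-based conversion of~\cite{Cesa-Bianchi02} picks the run hypothesis minimizing a penalized suffix-risk; its guarantee rests on an Azuma--Hoeffding bound for the martingale increments $\ell(h_{j,i-1},\bz_{t_{j,i}})-R_{t_{j,i}-1}(h_{j,i-1})$, legitimate because $h_{j,i-1}$ is measurable with respect to $\bz_{1:t_{j,i}-1}$, and this yields the $2\sqrt{\tfrac1{s_{j,n}}\log\tfrac{s^3_{j,n}(s_{j,n}+1)}{\delta}}$ and $\sqrt{\tfrac1{s_{j,n}}\log\tfrac{s_{j,n}^2}{\delta}}$ contributions to $U_\delta$. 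To pass from $\tfrac1s\inf_{h}\sum_i\ell(h,\bz_{t_{j,i}})$ to $\inf_{h\in\nH}R_n(h)$, I take $h^\star\in\argmin_{h\in\nH}R_n(h)$ and bound $\tfrac1s\sum_i\ell(h^\star,\bz_{t_{j,i}})$ by $\tfrac1s\sum_i R_{t_{j,i}-1}(h^\star)$ via another Azuma bound, now with a union over a $\beta$-cover of $\nL(\nH)$ of size $\nN_\infty(\nL(\nH),\beta,n)$ (needed since $h^\star$ is data-dependent), which contributes $\sqrt{\tfrac1{s_{j,n}}\log\tfrac{s_{j,n}^2\nN_\infty(\nL(\nH),\beta,n)}{\delta}}+4\beta$. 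Finally, because MACRO updates and selects subroutine $j$ only at steps $t$ with $M_{t,\tau_j}\le\varepsilon$, Assumption~\ref{assumption} gives $d_{t,\tau_j}\le\varepsilon$, so the triangle inequality lets one pass from the risks $R_{t_{j,i}-1}(\cdot)$ and $R_n(\cdot)$ to $R_{\tau_j-1}(\cdot)$, transferring the online-to-batch guarantee established along the run to the conditional risk at step $n$ (this is where any $\varepsilon$-dependent slack enters, exactly as in Theorems~\ref{theorem:meta-erm} and~\ref{theorem:online-convex}). Collecting the four contributions gives the per-subroutine bound $W_{j,n}/s+U_\delta(j,\beta)$, and the union bound from the first step completes the proof.

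The step I expect to be the main obstacle is the second one: making the CBCG online-to-batch conversion go through for a sample that is simultaneously dependent and drawn from time-varying conditional distributions. One must choose the filtration and martingale difference sequences so that Azuma's inequality still applies despite the hypotheses $h_{j,i-1}$ being data-dependent; insert the covering-number union at precisely the single place where a fixed comparator is unavoidable (the $\inf_h$), so that the $\nN_\infty$ factor does not contaminate the selection penalty; and handle that $s_{j,n}$, and indeed the update times $t_{j,i}$ themselves, are realization-dependent quantities, which is what forces the anytime union over $s$ and the $\delta/(s(s+1))$ budgeting rather than a single fixed-horizon application of~\cite{Cesa-Bianchi02}.
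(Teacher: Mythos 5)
Your proposal follows essentially the same route as the paper's proof: decompose over $E_{k,m}$, union-bound over the at most $k$ subroutines and over update counts $s\ge m$ with a $\delta/s^2$-type confidence budget summing to $k\delta/m$, adapt the score-based conversion of \cite{Cesa-Bianchi02} to dependent data via Azuma-type bounds on the martingale differences $\ell(h_{j,i-1},\bz_{t_{j,i}})-R_{t_{j,i}-1}(h_{j,i-1})$, insert the $\nN_\infty$ covering-number union only at the comparator $\inf_h$, and use the $\varepsilon$-closeness through $\tau_{I_n}$ to transfer between $R_n$ and $R_{t_{I_n,i}-1}$ (the paper does this in Lemma~\ref{lemma:nonconvex} and the subsequent argument, handling the stopping-time issue you flag by rewriting the sum over update times as an adapted weighted sum over all steps). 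The proposal is correct in approach and matches the paper's decomposition and key lemmas.
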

The same conditions as before will ensure $\varepsilon$-learnability. 
Note that to perform this form of  online-to-batch conversion neither $k$ nor $m$ need to be known.

\begin{figure*}
    \centering
    \includegraphics[width=380pt]{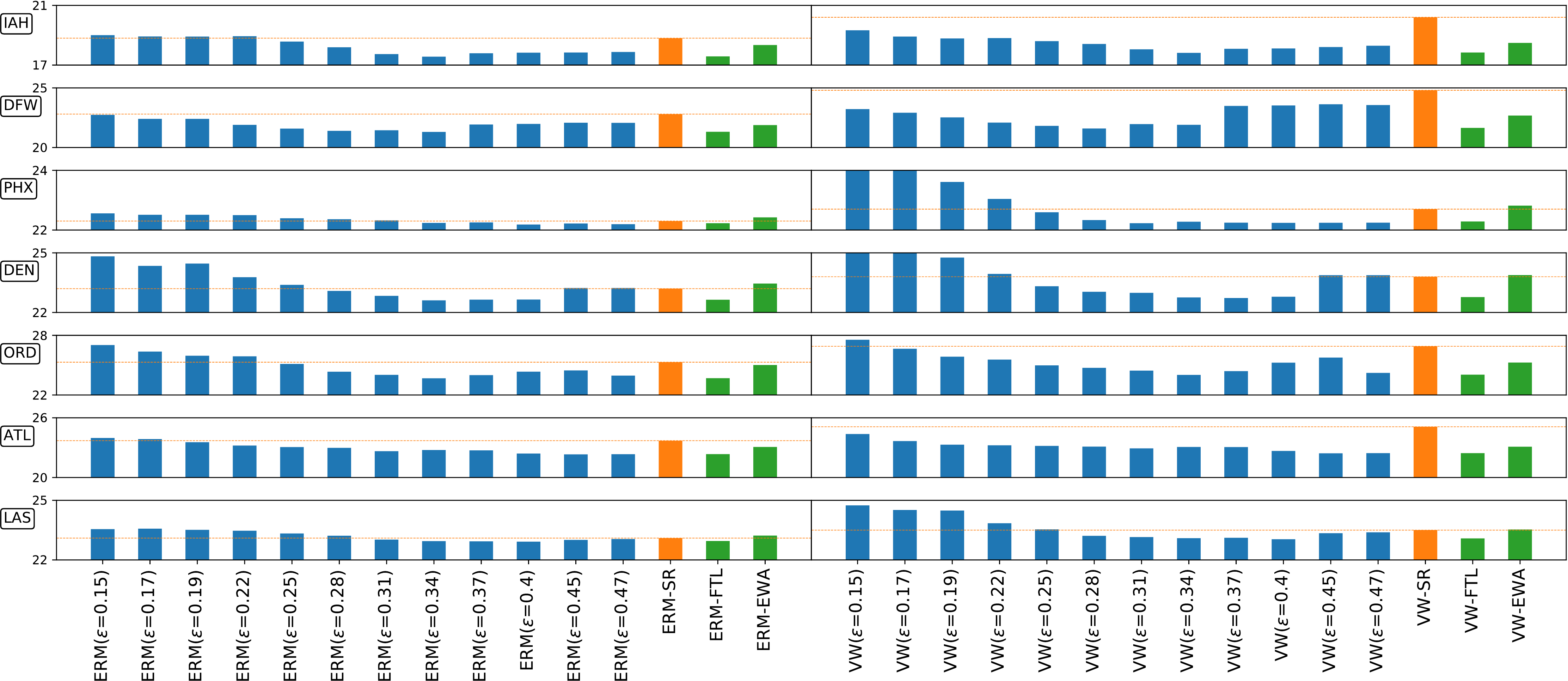}
    \caption{
Performance of MACRO with different subroutines on the DataExpo Airline dataset with the feature-based distance function.
Each row corresponds to a different airport labeled by its IATA code.
The y-axes shows error-rates; the x-axes is labeled by the short name of a subroutine and a threshold used in MACRO. 
ERM-FTL, ERM-EWA, VW-FTL and VW-EWA are the online strategies to choose the threshold.
Marginal versions of the subroutines, ERM-SR and VW-SR, act as baselines.}
\label{fig:airport}
\end{figure*}

\section{Experiments}\label{sec:practice}
In this section we highlight the practical applicability of the 
meta-algorithm by applying it to two large-scale sequential prediction problems, showing how CRM can lead to improved prediction quality compared to ordinary marginal learning. 
The code for all the experiments will be made publicly available. 

We adopt a classification setting, \ie $\nZ=\nX\times\nY$, where $\nX$ denotes a feature space, $\nY$ a set of labels, and $\ell$ is the $0/1$-loss.
Following the discussions of \cite{zimin2016aistats}, we use a distance between histories for the discrepancy bound. 
%
%
%
%
%
As the performance of the algorithm will depend on the choice of distance, we perform the experiments with two distances of different characteristics and study how they affect the predictive performance.
For the first distance, we consider only the labels of the data points in the histories and compare the vectors of the fractions of labels. 
For the second distance, we use the feature space and consider the $\ell_2$-distance between histories. 
The final conclusions for both distances are quite similar, therefore, we present the results only for the feature-based distance in the main manuscript. 
Results for the label-based distance as well as the exact definitions of the distances can be found in the supplementary material.

%
\begin{figure*}
    \centering
    \includegraphics[width=380pt]{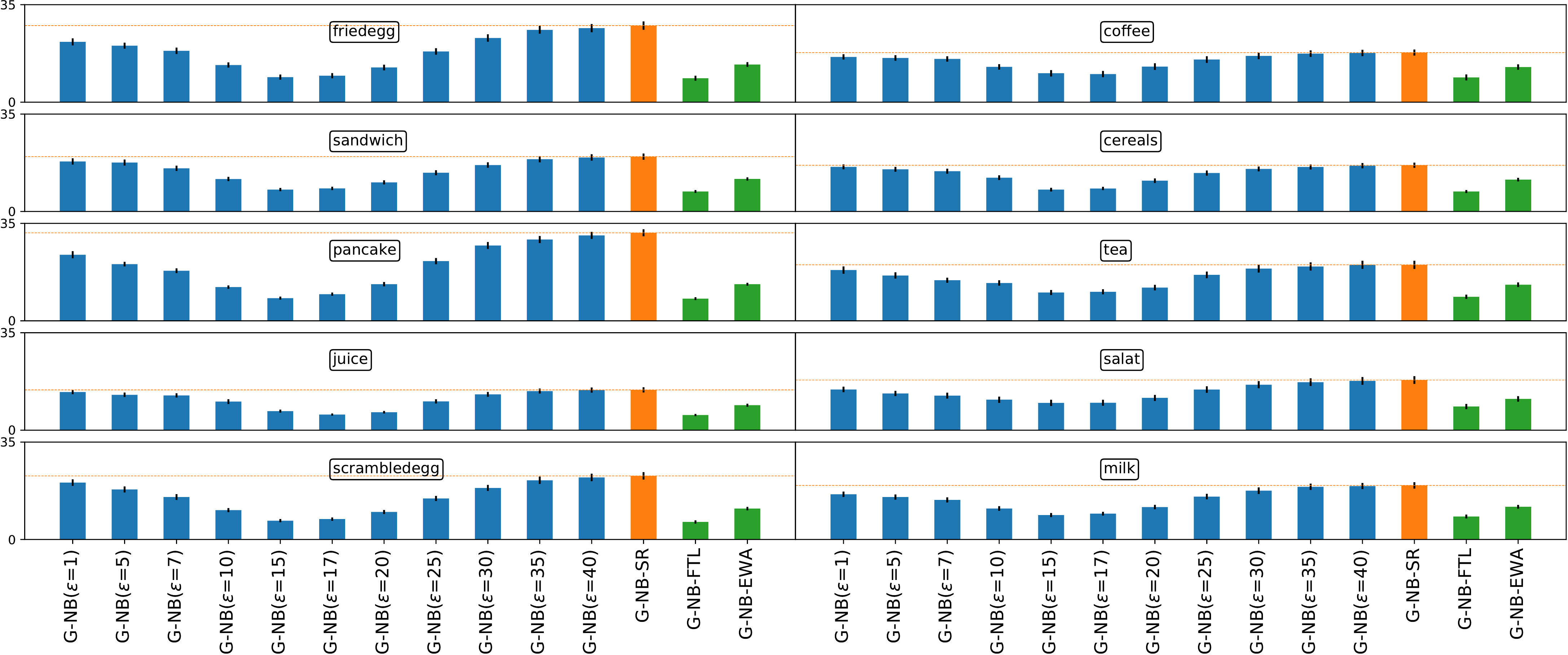}
    \caption{
Performance of MACRO with different subroutines on the Breakfast Actions dataset with feature-based distance function.
%
Each plot corresponds to different action. The y-axes shows error-rates averaged over the persons performing each action. 
The x-axes is labeled by the short name of a subroutine and a threshold used in MACRO.
G-NB-FTL and G-NB-EWA represent the online strategies to choose the threshold.
The baseline G-NB-SR is the marginal version of G-NB algorithm.}
\label{fig:breakfast}
\end{figure*}
%
%
%
%
%
%
\subsection{DataExpo Airline Dataset}
First, we apply MACRO to the \emph{DataExpo Airline} dataset \citep{airline_dataset},
%
which contains entries about all commercial flights in the United States between 1987 and 2008.
Out of these, we select the most recent year with complete data, 2007, and 
a number of the most active airports at that time, which gives, for example, more than 300000 flights for the Atalanta airport (ATL).
The task is a classification: predict if a flight is delayed ($y=1$) or not ($y=0$), where flights count as delayed if they arrive more than 15 minutes later than their 
scheduled arrival time. 
Clearly, the temporal order creates dependencies between flight delays
that a CRM approach can try to exploit for higher classification accuracy. 
Observations are defined by grouping the flights into 10 minute chunks,
so that at each time step, the task is to output a predictor that is applied to all flights in the next chunk. 

Since any algorithm can be used in MACRO as a subroutine, our goal is to show that MACRO is able to improve upon the baseline of just running the subroutine on the whole data (that any standard approach would do).
We perform experiments for both types of subroutines that we introduced in Section \ref{sec:subroutines} and which reflect the go-to choices for online classification problems in practice.
\begin{description}
\item[ERM] 
As tractable approximations for ERM we use logistic regression classifiers that are trained incrementally using stochastic
gradient descent, \ie $\omega_{t+1} \leftarrow \omega_t + \nabla_\omega \log \PPc{y_t}{x_t, \omega}$,
%
where $\omega_t$ are the parameters of the model at step~$t$.
\item[VW] As an online learning subroutine, we use \emph{Vowpal Wabbit}\footnote{\url{https://github.com/JohnLangford/vowpal_wabbit}}, 
a popular software package for large-scale online learning tasks. 
We set VW to use logistic loss as well with the default choice of meta parameters.
\end{description}

%
%
%

Figure~\ref{fig:airport} shows the results of the evaluating the MACRO with ERM and VW as subroutines comparing to a single ERM and VW algorithms run on the whole data.
Numeric results can be found in the supplemental material.
We see that in all of the presented airports MACRO achieves a better accuracy than the marginal versions of the corresponding algorithms for a wide range of thresholds $\varepsilon$. 
The effect is most profound with VW subroutine, where MACRO is able to achieve the performance on the level of MACRO with ERM subroutine, even though the VW subroutine itself seems to perform sub-optimally.

In addition to evaluating MACRO for a range of fixed thresholds, we show results for two methods that do not require to fix this parameter.
Both of them run a number of MACRO instances with different thresholds in parallel and choose the output by two standard online learning strategies: either Follow The Leader (FTL) or Exponentially Weighted Average (EWA).
Both strategies generally achieve good results, in particular better than marginal training,
with the online FTL strategy usually outperforming the EWA strategy and 
in all cases achieving an error-rate close to the best fixed threshold.
%
Even though both strategies use much more resources than a single instance of MACRO, 
they have the advantage of making the learning process completely parameter-free, 
and are therefore attractive 
if sufficient resources are available.
%

%
\subsection{Breakfast Actions Dataset}
In this set of experiments we present MACRO in a quite different setting.
We use the Breakfast Actions Dataset\footnote{\url{http://serre-lab.clps.brown.edu/resource/breakfast-actions-dataset/}}, which consists of videos of 52 people performing 10 actions related to breakfast preparation.
Each combination of a person and an action is treated as a separate learning task and the performance is measured by per frame error rate.
Following the usage of a Gaussianity assumption by previous approaches \cite{kuehne2014language,kuehne2016end}, we use Gaussian Naive Bayes classifiers trained online as subroutines.
\begin{description}
\item[G-NB] 
The algorithm tracks the running average in the feature space for each class separately and predicts the class with the closest mean.
After receiving a new point, the algorithm incrementally updates the mean of the corresponding class.
\end{description}
As for the airports dataset, we present the results for the feature-based distance, while results for the label-based one can be found in the supplement.
As above, we also evaluate the FTL and EWA strategies for threshold selection.

The results are presented in Figure \ref{fig:breakfast}.
We observe that the effect of MACRO is even stronger than for the airport dataset.
The error-rate is always reduced, in some cases by more than 70\% relatively to the baseline.
Both threshold-selection strategies show excellent performance, with FTL again outperforming EWA.

Overall, we see that MACRO consistently outperforms the traditional online algorithms
for both datasets.
This illustrates two facts: CRM is indeed a promising approach to sequential prediction problems,
and MACRO allows applying CRM principles to large real-world datasets that previously suggested methods are unable to handle.


%
%

%
%
\section{Conclusion}
In this paper we presented a new meta-algorithm, MACRO, for conditional risk minimization that is based on the idea of maintaining a number of learning subroutines that are created when necessary on-the-fly and trained individually only on relevant subsets of data.
We proved theoretical guarantees on the performance of the presented meta-algorithm for different choices of subroutines.
In contrast to previous work, MACRO does not require storing all observed data and can be efficiently implemented.
This makes MACRO the first CRM algorithm that is able to handle sequential learning problems of practically relevant size, as we demonstrate by applying it to two large scale problems, the DataExpo Airline and the Breakfast Actions datasets.
%
%

%

%
%

%

\bibliography{biblio}
\bibliographystyle{plain}

\newpage
\setcounter{theorem}{3}
\section*{Supplementary material}
To characterize a complexity of some function class we use covering numbers and a sequential fat-shattering dimension.
But before we could give those definitions, we need to introduce a notion of $\nZ$-valued trees.

A $\nZ$-valued tree of depth $n$ is a sequence $z_{1:n}$ of mappings $z_i: \lbrace \pm 1 \rbrace^{i-1} \ra \nZ$.
A sequence $\varepsilon_{1:n} \in \lbrace \pm 1 \rbrace^{n}$ defines a path in a tree.
To shorten the notations, $\bz_t(\varepsilon_{1:t-1})$ is denoted as $\bz_t(\varepsilon)$.
For a double sequence $z_{1:n}, z'_{1:n}$, we define $\chi_t(\varepsilon)$ as $z_t$ if $\varepsilon = 1$ and $z'_t$ if $\varepsilon = -1$.
Also define distributions $p_t(\varepsilon_{1:t-1}, z_{1:t-1}, z'_{1:t-1})$ over $\nZ$ as $\PPc{\cdot}{\chi_1(\varepsilon_1), \dots, \chi_{t-1}(\varepsilon_{t-1})}$, where $\mathbb{P}$ is a distribution of a process under consideration.
Then we can define a distribution $\rho$ over two $\nZ$-valued trees $\bz$ and $\bz'$ as follows: $\bz_1$ and $\bz'_1$ are sampled independently from the initial distribution of the process and for any path $\varepsilon_{1:n}$ for $2 \leq t \leq n$, $\bz_t(\varepsilon)$ and $\bz'_t(\varepsilon)$ are sampled independently from $p_t(\varepsilon_{1:t-1}, \bz_{1:t-1}(\varepsilon), \bz'_{1:t-1}(\varepsilon))$.

For any random variable $\by$ that is measurable with respect to $\sigma_{n}$ (a $\sigma$-algebra generated by $\bz_{1:n}$), we define its symmetrized counterpart $\tilde{\by}$ as follows.
We know that there exists a measurable function $\psi$ such that $\by = \psi(\bz_{1:n})$.
Then we define $\tilde{\by} = \psi(\chi_1(\epsilon_1), \dots, \chi_n(\varepsilon_n))$, where the samples used by $\chi_t$'s are understood from the context.

Now we can define covering numbers.
\begin{definition}
A set, $V$, of $\nR$-valued trees of depth $n$ is a (sequential) \textbf{$\theta$-cover} 
(with respect to the $\ell_\infty$-norm) of $\nF \subset \left\lbrace f: \nZ \ra \nR \right\rbrace$ on a tree $\bz$ of depth $n$ if
\begin{align}
\forall f \in \nF, & \forall \varepsilon \in \lbrace \pm 1\rbrace^n, \exists v\in V: \\
& \max_{1\leq t \leq n} \abs{ f(\bz_t(\varepsilon)) - v_t(\varepsilon) } \leq \theta.
\end{align}
The (sequential) \textbf{$\theta$-covering number} of a function class $\nF$ on a given tree $\bz$ is
\begin{align}\nN_\infty(\nF, \theta, \bz ) = \min \lbrace & \abs{V}: V 
\text{ is an $\theta$-cover} \\ 
&\text{w.r.t. $\ell_\infty$-norm of $\nF$ on $\bz$} \rbrace.
\end{align}
The \textbf{maximal $\theta$-covering number} of a function class $\nF$ over depth-$n$ trees is
\begin{equation}
\nN_\infty(\nF, \theta, n ) = \sup_{\bz}\nN_\infty(\nF, \theta, \bz ).
\end{equation}
\end{definition}

To control the growth of covering numbers we use the following notion of complexity.
\begin{definition}
A $\nZ$-valued tree $\bz$ of depth $n$ is $\theta$-shattered by a function class $\nF \subseteq \lbrace f: \nZ \ra \nR \rbrace$ if there exists an $\nR$-valued tree $s$ of depth $n$ such that
\begin{align}
\forall \varepsilon \in \lbrace \pm 1 \rbrace^n, & \exists f \in\nF \text{ s.t. } 1 \leq t \leq n,\\
& \varepsilon_t(f(\bz_t(\varepsilon)) - s_t(\varepsilon)) \geq \theta/2.
\end{align}
The \emph{(sequential) fat-shattering dimension} $\text{fat}_\theta(\nF)$ at scale $\theta$ is the largest $d$ such that $\nF$ $\theta$-shatters a $\nZ$-valued tree of depth $d$.
\end{definition}
An important result of \citep{Rakhlin01} is the following connection between the covering numbers and the fat-shattering dimension.
\begin{lemma}[Corollary 1 of \citep{Rakhlin01}]\label{lemma:coveringbound}
Let $\nF \subseteq \lbrace f : \nZ \ra [-1, 1] \rbrace$. For any $\theta > 0$ and any $n \geq 1$, we have that
\begin{equation}
\nN_\infty(\nF, \theta, n) \leq \left(\frac{2en}{\theta}\right)^{\text{fat}_\theta(\nF)}.
\end{equation}
\end{lemma}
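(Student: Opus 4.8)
The plan is to establish the bound through the standard two-stage reduction for sequential complexities: first discretize the real-valued class to a finite alphabet so that an exact (zero-scale) cover suffices, and then control the size of such an exact cover by a sequential analog of the Sauer--Shelah lemma governed by the fat-shattering dimension.

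First I would handle the discretization. Fix $\theta > 0$ and set $k = \lfloor 2/\theta \rfloor$. For $f \in \nF$ define a quantized function $g_f$ valued in $\{0,1,\dots,k\}$ by rounding $f(\cdot)$ onto a grid of spacing $\theta$ covering $[-1,1]$, and let $\nG = \{g_f : f \in \nF\}$. The key observation is that any exact ($0$-)cover of $\nG$ on a tree $\bz$, rescaled by $\theta$, yields a $\theta$-cover of $\nF$ on $\bz$, so $\nN_\infty(\nF,\theta,\bz) \le \nN_\infty(\nG,0,\bz)$ and hence $\nN_\infty(\nF,\theta,n) \le \nN_\infty(\nG,0,n)$. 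Moreover, the integer class $\nG$ can $1$-shatter a tree only when $\nF$ $\theta$-shatters it, which gives $\text{fat}_1(\nG) \le \text{fat}_\theta(\nF) =: d$. It therefore remains to bound the exact cover $\nN_\infty(\nG,0,n)$ of an integer-valued class of fat-shattering dimension $d$.

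The combinatorial core is the sequential Sauer--Shelah lemma: for any class $\nG \subseteq \{0,\dots,k\}^{\nZ}$ with $\text{fat}_1(\nG) \le d$ and any tree of depth $n$, the number of distinct path-labelings, equivalently $\nN_\infty(\nG,0,n)$, is at most $\sum_{i=0}^{d} \binom{n}{i} k^i$. I would prove this by strong induction on the depth $n$. At the root $\bz_1$ of the tree one records the value each $g \in \nG$ assigns there; the restrictions of $\nG$ to the left subtree (paths with $\varepsilon_1 = -1$) and to the right subtree ($\varepsilon_1 = +1$) are integer-valued classes of depth $n-1$, and the crucial step is to bound, via a shifting and double-counting argument, the size of the "overlap" class of function pairs that differ by at least one grid level at the root on both subtrees simultaneously. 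Realizing such an overlap at the root consumes one unit of fat-shattering dimension, so the overlap class has dimension at most $d-1$; combining the two subtree bounds with the $(d-1)$-bound for the overlap reproduces the Pascal recursion $\binom{n}{i} = \binom{n-1}{i} + \binom{n-1}{i-1}$, with the factor $k$ accounting for the alphabet. The base cases $n=0$ and $d=0$ are immediate.

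Finally I would combine the two parts and simplify the sum. Using $\binom{n}{i} \le (en/i)^i$ together with $k \le 2/\theta$, the standard estimate $\sum_{i=0}^{d}\binom{n}{i}(2/\theta)^i \le (2en/(\theta d))^d \le (2en/\theta)^d = (2en/\theta)^{\text{fat}_\theta(\nF)}$ yields the claimed inequality. I expect the genuine difficulty to reside entirely in the combinatorial induction above: unlike the classical i.i.d.\ Sauer--Shelah argument, the tree structure forces one to track the witness tree from the fat-shattering definition and to argue that an overlap realized on both subtrees can be glued into a tree shattered one level deeper, which is exactly where the sequential (rather than set-based) nature of the covering number must be handled with care.
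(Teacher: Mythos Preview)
The paper does not supply a proof of this lemma at all: it is stated purely as a citation (``Corollary~1 of \cite{Rakhlin01}'') and then used as a black box to control the growth of the sequential covering numbers. Your outline---discretize to a $\{0,\dots,k\}$-valued class, invoke the sequential Sauer--Shelah recursion to bound the zero-cover, and simplify the resulting sum---is precisely the argument of the original Rakhlin source, so there is nothing to compare against in the present paper and your sketch is faithful to the cited proof.
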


In all of the proofs we use the following technical lemma about the meta-algorithm.
\begin{lemma}\label{lemma:technical}
Irrespectively of the subroutine used by the meta-algorithm, for any $\alpha \in [0, 1]$ and $\beta \in [0, \beta/4]$, we have
\begin{align}
& \PP{ \sup_{h\in\nH}\absd{\frac{1}{s_{I_n,n}} \sum_{t\in C_{I_n,n}} (\ell(h, \bz_t) - R_{t-1}(h)) } > \alpha \wedge E_{k,m} } \notag \\
& \leq \frac{2k\nN_\infty(\nL(\nH),\beta, n)}{(\alpha-4\beta)^2} e^{-\frac{1}{2}m(\alpha-4\beta)^2}.
\end{align}
Moreover, for any $\alpha \in [0, 1]$ with $g_{I_n, i} = \ell(h_{I_n,i-1}, \bz_{t_{I_n,i}}) - R_{t_{I_n,i}-1}(h_{I_n,i-1})$
\begin{align}
& \PP{ \absd{\frac{1}{s_{I_n,n}} \sum_{i=1}^{s_{I_n,n}} g_{I_n, i} } > \alpha \wedge E_{k,m} } \notag \leq \frac{2k}{\alpha^2} e^{-\frac{1}{2}m\alpha^2}.
\end{align}

\end{lemma}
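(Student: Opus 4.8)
The plan is to reduce both inequalities to a uniform concentration bound for martingale difference sequences, evaluated on a fixed subroutine, and then pay an extra factor of $k$ for a union bound over the (at most $k$) subroutines that the exceptional event $E_{k,m}$ allows to be in $\supp{I_n}$. The crucial observation is that, conditioned on the history, each term $\ell(h,\bz_t) - R_{t-1}(h)$ is a martingale difference: by definition $R_{t-1}(h) = \EEd{\ell(h,\bz_t)}{t-1}$, so $\EEd{\ell(h,\bz_t) - R_{t-1}(h)}{t-1} = 0$. Since $\ell$ takes values in $[0,1]$, the increments are bounded, so Azuma--Hoeffding applies along the update times $C_{j,n}$ of the $j$-th subroutine. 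The subtlety is that $C_{j,n}$, $s_{j,n}$, and which subroutine is $I_n$ are themselves random and depend on the process, so I cannot directly apply Azuma to "the sum over $C_{I_n,n}$"; instead I would fix $j$ and $m$, consider the partial sums over the first $m$ (or $m'\ge m$) update times of subroutine $j$, and take a maximal inequality / union bound over the possible values.

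For the second (simpler) inequality: fix a subroutine index $j$. The quantities $g_{j,i} = \ell(h_{j,i-1},\bz_{t_{j,i}}) - R_{t_{j,i}-1}(h_{j,i-1})$ form a martingale difference sequence with respect to the filtration generated by the process up to time $t_{j,i}$, because $h_{j,i-1}$ is measurable with respect to the data before the $i$-th update of subroutine $j$, and $R_{t_{j,i}-1}(h_{j,i-1})$ is exactly the conditional expectation of $\ell(h_{j,i-1},\bz_{t_{j,i}})$ given that past. Each $g_{j,i}\in[-1,1]$. Applying Azuma--Hoeffding to $\sum_{i=1}^{s} g_{j,i}$ gives, for fixed $s$, a bound $2e^{-s\alpha^2/2}$ on $\PP{\absd{\frac1s\sum_{i\le s} g_{j,i}} > \alpha}$. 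On $E_{k,m}$ we have $s_{j,n}\ge m$ for every $j\in\supp{I_n}$; since the tail bound $2e^{-s\alpha^2/2}$ is decreasing in $s$, one handles the random $s_{j,n}$ either by a peeling/union bound over $s\ge m$ (the geometric series is dominated by its first term up to a constant) or, more cleanly, by a maximal inequality for the martingale $\sum g_{j,i}$ stopped appropriately — this is where I would be slightly careful, because one wants $e^{-\frac12 m\alpha^2}$ without an extra $\log$ factor, which a maximal inequality delivers. Finally, union-bounding over the at most $k$ possible indices in $\supp{I_n}$ gives the factor $k$, yielding $\frac{2k}{\alpha^2}e^{-\frac12 m\alpha^2}$; the $1/\alpha^2$ absorbs the constant from the peeling if one goes that route.

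For the first inequality the same scheme applies, with the added complication that the deviation is uniform over $h\in\nH$. Here I would first pass to a minimal sequential $\beta$-cover $V$ of $\nL(\nH)$ of size $\nN_\infty(\nL(\nH),\beta,n)$: replacing $\ell(h,\cdot)$ by the closest element $v$ of the cover along the realized path changes each term $\ell(h,\bz_t) - R_{t-1}(h)$ by at most $2\beta$ (one $\beta$ for $\ell(h,\bz_t)$ vs.\ $v$, one $\beta$ for the conditional expectations), hence the average by at most $2\beta$, and similarly bounding $R_{t-1}(h)$ by a covering element costs another $2\beta$, accounting for the $4\beta$ in the exponent and the replacement of $\alpha$ by $\alpha-4\beta$. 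For each fixed cover element $v$ and fixed subroutine $j$, $\sum_{t\in C_{j,n}}(v_t - \EEd{v_t}{t-1})$ is again a bounded martingale difference sum, so Azuma gives $2e^{-\frac12 m(\alpha-4\beta)^2}$ for the partial sums of length $\ge m$; union-bounding over the $\nN_\infty(\nL(\nH),\beta,n)$ cover elements and the $\le k$ subroutines in $\supp{I_n}$ produces the stated bound $\frac{2k\nN_\infty(\nL(\nH),\beta,n)}{(\alpha-4\beta)^2}e^{-\frac12 m(\alpha-4\beta)^2}$, with the $(\alpha-4\beta)^{-2}$ again absorbing the peeling constant over $s_{j,n}\ge m$. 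The main obstacle, and the step that needs the most care, is precisely this interplay between the random, data-dependent structure of $C_{j,n}$ and $I_n$ and the fixed-index martingale concentration: one must argue cleanly that conditioning on $E_{k,m}$ and union-bounding over a deterministic set of at most $k$ indices legitimately replaces the random $I_n$, and that the sequential cover (which is path-dependent) can be used along the single realized path without circularity — both are handled by the $\nZ$-valued tree / symmetrization machinery set up in the supplementary material, but making the reduction rigorous is the delicate part.
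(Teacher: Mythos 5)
Your proposal matches the paper's proof: the paper likewise union-bounds over the at most $k$ indices in $\supp{I_n}$ and over all possible values $s_{j,n}=i\ge m$ (summing the resulting series, which is what produces the $(\alpha-4\beta)^{-2}$ prefactor you correctly anticipated), and then applies a uniform martingale concentration bound with sequential covering numbers (imported as Lemma~4 of \cite{zimin2016aistats}) to each fixed subroutine index. The one step you flagged as delicate --- the interplay between the random update times and the martingale structure --- is resolved in the paper by rewriting the sum over $C_{j,n}$ as a sum over all $n$ time steps with predictable $\{0,1\}$ weights $w_t$ (and correspondingly defined adapted hypotheses $\bar{h}_t$), after which the fixed-index concentration lemma applies directly.
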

\begin{proof}
Introduce events $A_i = \lbrace I_n = i \rbrace$ for $i = 1, \dots, k$ and $B_{i,j} = \lbrace s_{j,n} = i \rbrace$ (we suppress the dependence on $n$ to increase readability).
Observe that $E_{k,m} = \lbrace \cup_{i\geq 1} A_i \rbrace \wedge \lbrace \cup_{i \geq m} B_{i,I_n} \rbrace$.
Denoting $\Lambda(j) =  \sup_{h\in\nH}\absd{\frac{1}{s_{j,n}} \sum_{t\in C_{j,n}} (\ell(h, \bz_t) - R_{t-1}(h)) }$, we have
\begin{align}
& \PP{\Lambda(I_n) > \alpha \wedge E_{k,m}} \\
& \leq \sum_{j \in \supp{I_n}} \PP{\Lambda(j) > \alpha \wedge \lbrace \cup_{i \geq m} B_{i,j} \rbrace}.
\end{align}
Each of the last probabilities can be bounded using a union bound.
\begin{align}
\PP{\Lambda(j) > \alpha \wedge \lbrace \cup_{i \geq m} B_{i,j} \rbrace} \leq \sum_{i\geq m} \PP{\Lambda(j) > \alpha \wedge B_{i,j}}.
\end{align}
Using Lemma 4 from \citep{zimin2016aistats}, we get
\begin{align}
\PP{\Lambda(j) > \alpha \wedge B_{i,j}} \leq 2 \nN_\infty(\nL(\nH), \beta, n) e^{-\frac{1}{2}i(\alpha-4\beta)^2}.
\end{align}
Summing the probabilities, we obtain the first statement of the lemma.

For the second part of the lemma, denote $\Lambda(j) = \absd{\frac{1}{s_{j,n}} \sum_{i=1}^{s_{j,n}} (\ell(h_{j,i-1}, \bz_{t_{j,i}}) - R_{t_{j,i}-1}(h_{j,i-1})) }$ and, using the same decomposition as above, we have to bound $\PP{\Lambda(j) > \alpha \wedge B_{i,j}}$.
Observe that each $h_{j,i}$ is adapted to the filtration generated by $\lbrace \bz_{t_{j,i}} \rbrace_{i=1}^\infty$, hence, $\ell(h_{j,i}, \bz_{t_{j,i-1}}) - R_{t_{j,i}-1}(h_{j,i-1})$ behaves like a martingale difference sequence. 
However, there is a technical difficulty in the fact that the indices $t_{j,i}$ are in fact stopping times.
To get around it, observe that we can write $\Lambda(j)$ as a sum over all the data with the adapted weights.
Set $w_t$ to $1$ if we updated the algorithm $j$ at step $t$ and to $0$ otherwise.
Correspondingly, define $\bar{h}_t$ as the last chosen hypothesis by the $j$-th algorithm.
This way, both $w_t$ and $\bar{h}_t$ are adapted to the original process.
Then
\begin{align}
& \sum_{i=1}^{s_{j,n}} ( \ell(h_{j,i-1}, \bz_{t_{j,i}}) - R_{t_{j,i}-1}(h_{j,i-1}) )\\
& = \sum_{t=1}^{n} w_t ( \ell(\bar{h}_t, \bz_t) - R_{t-1}(\bar{h}_t)).
\end{align}
At this point we can again use Lemma 4 from \citep{zimin2016aistats} and get the second statement of the lemma.
\end{proof}

\begin{proof}[Proof of Lemma 1]
The lower bound comes from the fact that MACRO constructs an $\varepsilon$-covering. For the upper bound, observe that a new subroutine is started if and only if its associated conditional distribution differs by more than 
$\varepsilon$ from the ones of all previously created subroutines. 
Therefore, the set of conditional distribution associated with 
subroutines form an \emph{$\varepsilon$-separated set} with respect to $M_{i,j}$'s (no two elements 
are closer than $\varepsilon$ to each other). The maximal size 
of such a set is at most the \emph{covering number} of half 
the distance. 
\end{proof}

\begin{proof}[Proof of Theorem 1]
We start by the usual argument for the empirical risk minimization that allows us to focus on the uniform deviations.
\begin{equation}
R_n(h_n) - \inf_{h\in\nH}R_n(h) \leq 2 \sup_{h\in\nH}\absd{R_n(h) - \hat{R}_n(h, I_n)}.
\end{equation}
Denoting by $\bar{R}_n(h, I_n) = \frac{1}{s_{I_n,n}}\sum_{t\in C_{I_n,n}} R_{t-1}(h)$, we can upper bound the last term.
\begin{align}
& \sup_{h\in\nH}\absd{R_n(h) - \hat{R}_n(h, I_n)} \\
& \leq \sup_{h\in\nH}\absd{R_n(h) - \bar{R}_n(h, I_n)} + \sup_{h\in\nH}\absd{\bar{R}_n(h, I_n) - \hat{R}_n(h, I_n)} \\
& \leq \frac{1}{s_{I_n,n}}\sum_{i\in C_{I_n,n}} d_{i,n+1} + \sup_{h\in\nH}\absd{\bar{R}_n(h, I_n) - \hat{R}_n(h, I_n)} \\
& \leq \frac{1}{s_{I_n,n}}\sum_{i\in C_{I_n,n}} M_{i,n+1} + \sup_{h\in\nH}\absd{\bar{R}_n(h, I_n) - \hat{R}_n(h, I_n)} \\
& \leq 2\varepsilon + \sup_{h\in\nH}\absd{\bar{R}_n(h, I_n) - \hat{R}_n(h, I_n)},
\end{align}
where the last bound follows from the way the meta-algorithm chooses $I_n$.
Hence, we get
\begin{align}
& \PP{R_n(h_n) - \inf_{h\in\nH}R_n(h) > \alpha + 2\varepsilon} \\
& \leq \PP{ \sup_{h\in\nH}\absd{\bar{R}_n(h, I_n) - \hat{R}_n(h, I_n)} > \alpha }.
\end{align}
The last probability can be bounded using Lemma \ref{lemma:technical} giving us the statement of the theorem.
\end{proof}

\begin{proof}[Proof of Theorem 2]
Note that by the way $I_n$ is chosen, we get for any $h \in \nH$ that
\begin{equation}
R_n(h) - R_{t_{I_n,i-1}}(h) \leq 2\varepsilon.
\end{equation}
Therefore, by using the convexity of the loss
\begin{align}
R_n(h_n) & \leq \frac{1}{s_{I_n,n}}\sum_{i=1}^{s_{I_n,n}} R_n(h_{I_n,i}) \\
& \leq \frac{1}{s_{I_n,n}}\sum_{i=1}^{s_{I_n,n}} R_{t_{I_n,i}-1}(h_{I_n,i}) + 2\varepsilon.
\end{align}
Similarly, for any fixed $h$
\begin{equation}
R_n(h) \geq \frac{1}{s_{I_n,n}}\sum_{i=1}^{s_{I_n,n}} R_{t_{I_n,i}-1}(h) - 2\varepsilon.
\end{equation}
Therefore,
\begin{align}
& R_n(h_n) - \inf_{h\in\nH}R_n(h) \\
& \leq 4\varepsilon + \frac{1}{s_{I_n,n}}\sum_{i=1}^{s_{I_n,n}} R_{t_{I_n,i}-1}(h_{I_n,i}) \\
& - \inf_{h\in\nH} \frac{1}{s_{I_n,n}}\sum_{i=1}^{s_{I_n,n}} R_{t_{I_n,i}-1}(h). \label{eq:risk-difference}
\end{align}
We split the last difference into the following three terms and deal with them separately.
\begin{align}
T_1 & = \frac{1}{s_{I_n,n}}\sum_{i=1}^{s_{I_n,n}}( R_{t_{I_n,i}-1}(h_{I_n,i}) - \ell(h_{I_n,i}, \bz_{t_{I_n,i}})) \\
T_2 & = \frac{1}{s_{I_n,n}} \sum_{i=1}^{s_{I_n,n}} \ell(h_{I_n,i}, \bz_{t_{I_n,i}}) - \inf_{h\in\nH} \sum_{i=1}^{s_{I_n,n}} \ell(h, \bz_{t_{I_n,i}}) \\
T_3 & = \inf_{h\in\nH} \sum_{i=1}^{s_{I_n,n}} \ell(h, \bz_{t_{I_n,i}}) - \inf_{h\in\nH} \frac{1}{s_{I_n,n}}\sum_{i=1}^{s_{I_n,n}} R_{t_{I_n,i}-1}(h).
\end{align}
The first term can be bounded using Lemma \ref*{lemma:technical}.
$T_2$ is in fact just $W_{I_n,n}$.
For $T_3$ observe that
\begin{align}
& \inf_{h\in\nH} \frac{1}{s_{I_n,n}}\sum_{i=1}^{s_{I_n,n}} R_{t_{I_n,i}-1}(h) \geq  \inf_{h\in\nH}\frac{1}{s_{I_n,n}}\sum_{i=1}^{s_{I_n,n}} \ell(h, \bz_{t_{I_n,i}}) \\
& + \inf_{h\in\nH} ( \frac{1}{s_{I_n,n}}\sum_{i=1}^{s_{I_n,n}} ( R_{t_{I_n,i}-1}(h) - \ell(h, \bz_{t_{I_n,i}}) ).
\end{align}
Therefore, $T_3$ is bounded by $\tilde{T}_3$:
\begin{equation}
\tilde{T}_3 = \sup_{h\in\nH} ( \frac{1}{s_{I_n,n}}\sum_{i=1}^{s_{I_n,n}} ( \ell(h, \bz_{t_{I_n,i}}) - R_{t_{I_n,i}-1}(h) ).
\end{equation}
Combining everything together,
\begin{align}
& \PP{R_n(h_n) - \inf_{h}R_n(h) > \alpha + 4 \varepsilon + W_{I_n,n}} \\
& \leq \PP{ T_1 + \tilde{T}_3 > \alpha  \wedge E_{k,m} } + \PP{E_{k,m}^c} \\
& \leq \PP{ T_1 > \alpha/2 \wedge E_{k,m} }\\
&  + \PP{ \tilde{T}_3 > \alpha/2 \wedge E_{k,m} } + \PP{E_{k,m}^c} \notag
\end{align}
The both terms in the last line can be bounded using Lemma \ref{lemma:technical} giving us the statement of the theorem.
\end{proof}

\paragraph{Online-to-batch conversion for non-convex losses.}
Here we describe the modification of the online-to-batch conversion method of \cite{Cesa-Bianchi02}.
As the original method was designed to work for \iid data, we need to extend it to stochastic processes.
The general idea is to assign a score to each of $h_{j,i}$ 
and choose the one with the lowest score.
For a given confidence $\delta > 0$, the score of $h_{j,i}$ is computed as
\begin{equation}
u_n(j, i) = \tilde{R}_{n}(j,i) + c_{j,\delta}(s_{j,n}-i).
\end{equation}
where
\begin{equation}
\tilde{R}_{n}(j,i) = \frac{1}{s_{j,n}-i}\sum_{k=i+1}^{s_{j,n}}\ell(h_{j,i}, \bz_{t_{j,k}}) \quad \text{and} \quad c_{j, \delta}(t) = \sqrt{\frac{1}{2(t+1)}\log \frac{s^3_{j,n}(s_{j,n}+1)}{\delta}}.
\end{equation}
Setting $J_n = \argmin_{1\leq i \leq s_{I_n,n}} u_n(I_n, i)$, MACRO's output is $h_n = h_{I_n, J_n}.$

The following lemma is analog of Lemma 3 from \cite{Cesa-Bianchi02} proved for the case of dependent data and the conditional risk.
\begin{lemma}\label{lemma:nonconvex}
For the setting of Theorem 3, let 
\begin{equation}
v(j,i) =  R_n(h_{j,i}) + 2 c_{I_n, \delta}(s_{I_n,n}-i).
\end{equation}
Then we have
\begin{equation}
\PP{R_n(h_n) > \min_{1 \leq i \leq s_{I_n,n}} v(I_n, i) + 2 \varepsilon \wedge E_{k,m}} \leq \frac{k \delta}{m}.
\end{equation}
\end{lemma}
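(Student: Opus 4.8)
\emph{Proof plan for Lemma~\ref{lemma:nonconvex}.} The idea is to carry the penalized online-to-batch argument of \cite{Cesa-Bianchi02} over to our dependent setting, with the conditional risk $R_n(\cdot)$ in place of the i.i.d.\ risk and a martingale (Azuma-type) inequality in place of Hoeffding's. I would split the argument into (a) a \emph{deviation estimate}: on the event $E_{k,m}$, simultaneously for every $i\le s_{I_n,n}$, the held-out empirical risk $\tilde R_n(I_n,i)$ approximates $R_n(h_{I_n,i})$ up to $c_{I_n,\delta}(s_{I_n,n}-i)$ plus a slack of order $\varepsilon$ coming from the $\varepsilon$-selection rule, and the exceptional probability of this is at most $k\delta/m$; and (b) the elementary \emph{selection argument}, which upgrades (a) to the claim using only that $J_n$ minimizes the scores $u_n(I_n,\cdot)$.

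For (a) I would fix a subroutine index $j$ and an update count $i$, set $\bar R_n(j,i)=\frac{1}{s_{j,n}-i}\sum_{k=i+1}^{s_{j,n}}R_{t_{j,k}-1}(h_{j,i})$, and decompose
\[
\tilde R_n(j,i)-R_n(h_{j,i})=\big(\tilde R_n(j,i)-\bar R_n(j,i)\big)+\big(\bar R_n(j,i)-R_n(h_{j,i})\big).
\]
The second bracket is controlled by the discrepancies: $h_{j,i}$ is fixed once the $i$-th update has happened, and each update time $t_{j,k}$ of the subroutine MACRO uses in step $n$ is $\varepsilon$-close to $n$ in the sense of the algorithm, so $|R_{t_{j,k}-1}(h)-R_n(h)|\le 2\varepsilon$ uniformly in $h$ by the triangle inequality for the pairwise discrepancy — the same consequence of the choice of $I_n$ already exploited in the proofs of Theorems~\ref{theorem:meta-erm} and~\ref{theorem:online-convex}. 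For the first bracket, $h_{j,i}$ is measurable with respect to the first $i$ updates, so $\ell(h_{j,i},\bz_{t_{j,k}})-R_{t_{j,k}-1}(h_{j,i})$, $k>i$, is a bounded martingale difference sequence; exactly as in the proof of Lemma~\ref{lemma:technical}, I would remove the stopping-time nuisance by rewriting this as a sum over the original time axis with $\{0,1\}$-valued adapted weights and then apply the Azuma--Hoeffding bound (Lemma~4 of \cite{zimin2016aistats}) with $s_{j,n}-i$ terms. The factor $\log\!\big(s_{j,n}^3(s_{j,n}+1)/\delta\big)$ in $c_{j,\delta}$ is calibrated precisely so that a union bound over $i\in\{1,\dots,s_{j,n}\}$ and over the random value of $s_{j,n}$ — which is $\ge m$ on $E_{k,m}$ — keeps the one-sided failure probability for subroutine $j$ below $\delta/m$; a last union over the at most $k$ admissible values $j\in\supp{I_n}$ of $I_n$ on $E_{k,m}$ (the decomposition already used in Lemma~\ref{lemma:technical}) gives total exceptional probability $k\delta/m$.

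For (b), work on the complement of that exceptional event. By $h_n=h_{I_n,J_n}$ and the upper half of (a), $R_n(h_n)\le\tilde R_n(I_n,J_n)+c_{I_n,\delta}(s_{I_n,n}-J_n)+O(\varepsilon)=u_n(I_n,J_n)+O(\varepsilon)$; since $J_n$ minimizes $i\mapsto u_n(I_n,i)$ this is $\le u_n(I_n,i)+O(\varepsilon)$ for every $i$, and the lower half of (a) applied to $\tilde R_n(I_n,i)$ turns the bound into $R_n(h_{I_n,i})+2c_{I_n,\delta}(s_{I_n,n}-i)+O(\varepsilon)=v(I_n,i)+O(\varepsilon)$. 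Minimizing over $i$ and bookkeeping the two discrepancy contributions — each a single $\varepsilon$-closeness comparison of conditional risks controlled by the choice of $I_n$ — gives $R_n(h_n)\le\min_{1\le i\le s_{I_n,n}}v(I_n,i)+2\varepsilon$ on this event, which is the assertion.

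The step I expect to be the real obstacle is the deviation estimate, and inside it the fact that $\tilde R_n(j,i)$ is an average over a \emph{random, data-dependent} number of observations taken at \emph{stopping times}, not over a prescribed held-out set; no off-the-shelf Hoeffding bound applies directly, and the fix — the adapted-weight reformulation together with the inflated logarithmic factor in $c_{j,\delta}$ that pays for the union over the possible values of $s_{j,n}$ — is the crux. The only other care needed is that $I_n$ is itself random, which is why the statement lives on $E_{k,m}$ and carries the factor $k$; everything else reduces to the standard i.i.d.-style computation in \cite{Cesa-Bianchi02}.
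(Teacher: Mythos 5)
Your proposal is correct and rests on exactly the same technical core as the paper's proof: the decomposition $\tilde R_n(j,i)-R_n(h_{j,i})=(\tilde R_n(j,i)-\bar R_n(j,i))+(\bar R_n(j,i)-R_n(h_{j,i}))$, the $2\varepsilon$ control of the second bracket via the triangle inequality through $\tau_{I_n}$, the adapted-weight reformulation that turns the stopping-time sum into a martingale on the original time axis (Lemma~\ref{lemma:technical}), and the three-fold union bound over $i$, over the values $r\ge m$ of $s_{I_n,n}$, and over the at most $k$ elements of $\supp{I_n}$, calibrated by the $s^3(s+1)$ factor in $c_{j,\delta}$. Where you differ is in the selection step: you prove a \emph{two-sided} uniform deviation bound for every candidate $i$ and then run the elementary sandwich $R_n(h_n)\le u_n(I_n,J_n)\le u_n(I_n,i)\le v(I_n,i)+O(\varepsilon)$, whereas the paper follows \cite{Cesa-Bianchi02} more literally: it introduces the risk-minimizing comparator $J^\star_n$, the events $B_i$, and the three-way split $D_{1,i}\cup D_{2,i}\cup D_3$, which needs only a lower-deviation bound for each $i$ and an upper-deviation bound for the single index $J^\star_n$. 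Your version is cleaner and costs at most a factor of two in the union bound, which the slack in $k\delta/m$ absorbs; the paper's version is the one that, in principle, lets one shave a logarithmic factor from $c_{j,\delta}$, though the paper does not exploit this. One constant you should track more carefully: each invocation of the discrepancy argument contributes $2\varepsilon$ (via $M_{t,\tau_{I_n}}\le\varepsilon$ and $M_{n+1,\tau_{I_n}}\le\varepsilon$), and your sandwich uses it twice, so honest bookkeeping yields $4\varepsilon$ rather than the stated $2\varepsilon$; the paper's own proof has the same wobble (its displayed definition of $D_{1,i}$ uses $\varepsilon$ but the subsequent probability computation silently replaces it by $2\varepsilon$), so this is an inaccuracy inherited from the source rather than a flaw in your reasoning.
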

\begin{proof}
Introduce events $A_{r} = \lbrace \abs{\supp{I_n}} \leq k \wedge s_{I_n, n} = r\rbrace$.
Using a union bound, we have
\begin{align}
& \PP{R_n(h_n) > \min_{1 \leq i \leq s_{I_n,n}} v(I_n, i) + 2 \varepsilon \wedge E_{k,m}} \\
& \leq \sum_{r \geq m} \PP{R_n(h_n) > \min_{1 \leq i \leq s_{I_n,n}} v(I_n, i) + 2 \varepsilon \wedge A_r}.
\end{align}
Therefore, we will focus on the last probabilities.
Let $J^\star_n = \argmin_{1 \leq i \leq s_{I_n,n}} v(I_n, i) $ and also introduce events $B_i = \lbrace \tilde{R}_n(I_n, i) + c_{I_n,\delta}(s_{I_n,n}-i) \leq \tilde{R}_n(I_n, J^\star_n) + c_{I_n,\delta}(s_{I_n,n}-J^\star_n) \rbrace$.
Then, since $\tilde{R}_n(I_n, J_n) + c_{I_n,\delta}(s_{I_n,n}-J_n) \leq \tilde{R}_n(I_n, J^\star_n) + c_{I_n,\delta}(s_{I_n,n}-J^\star_n)$ is always true, we get
\begin{align}
& \PP{R_n(h_n) > \min_{1 \leq i \leq s_{I_n,n}} v(I_n, i) + 2 \varepsilon \wedge A_r} \\
& \leq \sum_{i=1}^{r} \PP{R_n(h_{I_n,i}) > v(I_n, J^\star_n) + 2 \varepsilon \wedge B_i \wedge A_r}.
\end{align}
Observe that if $B_i$ is true, then at least one of the following events is also true.
\begin{align}
& D_{1,i} = \lbrace \tilde{R}_n(I_n, i) \leq R_n(h_{I_n,i}) - \varepsilon - c_{I_n, \delta}(s_{I_n,n} - i) \rbrace,\\
& D_{2,i} = \lbrace R_n(h_{I_n,i}) < v(I_n, J^\star_n) + 2\varepsilon  \rbrace, \\
& D_3 = \lbrace \tilde{R}_n(I_n, J^\star_n) > R_n(h_{I_n,J^\star_n}) + \varepsilon + c_{I_n, \delta}(s_{I_n,n} - J^\star_n) \rbrace.
\end{align}
From this we get
\begin{align}
& \PP{R_n(h_{I_n,i}) > v(I_n, J^\star_n) + 2 \varepsilon \wedge B_i \wedge A_r} \\
& \leq \PP{ R_n(h_{I_n,i}) > v(I_n, J^\star_n) + 2 \varepsilon \wedge D_{1,i} \wedge A_r } \\
& + \PP{ R_n(h_{I_n,i}) > v(I_n, J^\star_n) + 2 \varepsilon \wedge D_{2,i} \wedge A_r } \\
& + \PP{ R_n(h_{I_n,i}) > v(I_n, J^\star_n) + 2 \varepsilon \wedge D_{3} \wedge A_r }.
\end{align}
First, notice that 
\begin{equation}
\PP{ R_n(h_{I_n,i}) >v(I_n, J^\star_n) + 2 \varepsilon \wedge D_{2,i} \wedge A_r } = 0.
\end{equation}
Moreover, since 
\begin{equation}
\abs{ R_n(h_{I_n,i}) - \bar{R}(I_n,i)} \leq 2 \varepsilon
\end{equation}
for $\bar{R}(j,i) = \frac{1}{s_{j,n}-i}\sum_{s=i+1}^{s_{j,n}} R_{t_{j,s}-1}(h_{j,i})$
we have
\begin{align}
& \PP{D_{1,i} \wedge A_r} \\
& = \PP{  \tilde{R}_n(I_n, i) \leq R_n(h_{I_n,i}) - 2\varepsilon - c_{I_n, \delta}(s_{I_n,n} - i) \wedge A_r } \\
& \leq \PP{ \tilde{R}_n(I_n, i) \leq \bar{R}(I_n,i) - c_{I_n, \delta}(s_{I_n,n} - i) \wedge A_r } \\
& \leq \sum_{j\in\supp{I_n}} \PP{ \tilde{R}_n(j, i) \leq \bar{R}(j,i) - c_{j, \delta}(s_{j,n} - i) \wedge A_r }.
\end{align}
From Lemma \ref{lemma:technical} we get that
\begin{align}
& \PP{ \tilde{R}_n(j, i) \leq \bar{R}(j,i) - c_{j, \delta}(s_{j,n} - i) \wedge A_r } \\
& \leq \frac{\delta}{r^3(r+1)}.
\end{align}
And, hence, 
\begin{equation}
\PP{D_{1,i} \wedge A_r} \leq \frac{k\delta}{r^3(r+1)}.
\end{equation}
Similarly,
\begin{equation}
\PP{D_{3} \wedge A_r} \leq \frac{k\delta}{r^2(r+1)}.
\end{equation}
Combining these two together, we get
\begin{equation}
\PP{R_n(h_n) > \min_{1 \leq i \leq s_{I_n,n}} v(I_n, i) + 2 \varepsilon \wedge A_r} \leq \frac{k\delta}{r^2},
\end{equation}
which gives us the statement on the lemma.
\end{proof}
\begin{proof}[Proof of Theorem 3]
From Lemma \ref{lemma:nonconvex} we get that with high probability
\begin{equation}
R_n(h_n) \leq \min_{1 \leq i \leq s_{I_n,n}} v(I_n, i) + 2 \varepsilon.
\end{equation}
Hence, we focus on bounding $\min_{1 \leq i \leq s_{I_n,n}} v(I_n, i)$.
Observe that
\begin{align}
& \min_{1 \leq i \leq s_{I_n,n}} v(I_n, i) \\
& \leq \frac{1}{s_{I_n,n}} \sum_{i=1}^{s_{I_n, n}} ( R_n(h_{I_n,i}) + c_{I_n,\delta/2}(s_{I_n,n}-i) ) \\
& \leq \frac{1}{s_{I_n,n}} \sum_{i=1}^{s_{I_n, n}} ( R_{t_{I_n,i}-1}(h_{I_n,i}) + 2\varepsilon + c_{I_n,\delta/2}(s_{I_n,n}-i) ) \\
& \leq \frac{1}{s_{I_n,n}} \sum_{i=1}^{s_{I_n, n}} R_{t_{I_n,i}-1}(h_{I_n,i}) + 2\varepsilon \\
& + 2 \sqrt{\frac{1}{s_{I_n,n}}\log \frac{s^3_{I_n,n}(s_{I_n,n}+1)}{\delta}}.
\end{align}
Similarly to Lemma \ref{lemma:technical}, we have that with high probability on $E_{k,m}$:
\begin{align}
& \frac{1}{s_{I_n,n}} \sum_{i=1}^{s_{I_n, n}} R_{t_{I_n,i}-1}(h_{I_n,i}) \\
& \leq \frac{1}{s_{I_n,n}} \sum_{i=1}^{s_{I_n, n}} \ell(h_{I_n,i}, \bz_{t_{I_n,i}}) + \sqrt{\frac{1}{s_{I_n,n}} \log \frac{s_{I_n,n}^2}{\delta}}.
\end{align}
Similarly to the proof of Theorem 2, we get with high probability on $E_{k,m}$:
\begin{align}
& - \inf_{h\in\nH} R_n(h) \leq - \inf_{h} \frac{1}{s_{I_n,n}} \sum_{i=1}^{s_{I_n, n}} \ell(h, \bz_{t_{I_n, i}}) \\
& + \sqrt{\frac{1}{s_{I_n,n}} \log \frac{s_{I_n,n}^2 \nN_\infty(\nL(\nH), \beta, n)}{\delta}} + 4\beta.
\end{align}
Therefore, we can conclude that
\begin{align}
& \PP{ R_n(h_n) - \inf_{h\in\nH} R_n(h) > W_{I_n,n} + U_\delta(I_n, \beta) \wedge E_{k,m}} \\
& \leq \frac{3k\delta}{m} + \PP{E_{k,m}^c}.
\end{align}
\end{proof}
\paragraph{Full learnability.}
First, we fix a sequence, $\varepsilon_n$, that converges to zero.
Then, at a step $n$ of MACRO, when it decides on the subroutines to update 
and chooses the active hypothesis, we perform all the necessary checks using 
the corresponding element of the sequence, \ie $\varepsilon_n$.
This change results in the following version of Theorem 2 
(and analogous results can be derived from Theorems 1 and 3).
\begin{theorem}\label{theorem:online-convex-learnability}
For a convex loss $\ell$, if the subroutines of the meta-algorithm use an averaging for online-to-batch conversion, we have for any $\alpha \in [0,1]$ and $\beta \in [0, \alpha/8]$
\begin{align}
& \PP{R_n(h_n) - \inf_{h\in\nH}R_n(h) > \alpha + \frac{W_{I_n,n}}{s_{I_n,n}} + \frac{4}{s_{I_n,n}} \sum_{t \in C_{I_n,n}} \varepsilon_t } \notag \\
& \leq \frac{4k\nN_\infty(\nL(\nH),\beta, n)}{(\alpha/2-4\beta)^2} e^{-\frac{1}{2}m(\alpha/2-4\beta)^2} + \PP{E_{k,m}^c}.
\end{align}
\end{theorem}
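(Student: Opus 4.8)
The plan is to re-run the proof of Theorem~\ref{theorem:online-convex} with no structural change: the only difference in the modified meta-algorithm is that the closeness test executed at a step $t$ now certifies $M_{\cdot,\tau_j}\le\varepsilon_t$ rather than $M_{\cdot,\tau_j}\le\varepsilon$, so every slack that the old proof extracted from the selection rule is replaced by the value of the sequence at the step where the corresponding test was passed. Since the concentration estimates never see the thresholds, they will be left untouched.

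First I would isolate where the constant $4\varepsilon$ entered the old argument. It came from two applications of $|R_n(h)-R_{t_{I_n,i}-1}(h)|\le 2\varepsilon$ --- once to move from $R_n(h_n)$ to $\tfrac{1}{s_{I_n,n}}\sum_i R_{t_{I_n,i}-1}(h_{I_n,i})$ using convexity, once to lower-bound $\inf_h R_n(h)$ by $\tfrac{1}{s_{I_n,n}}\sum_i R_{t_{I_n,i}-1}(h)-2\varepsilon$ --- and that inequality followed from the triangle inequality $d_{n+1,t_{I_n,i}}\le d_{n+1,\tau_{I_n}}+d_{\tau_{I_n},t_{I_n,i}}$ together with the two tests passed for subroutine $I_n$ (one at its update time $t_{I_n,i}$, one at the output step $n$). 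In the time-varying version the update-time test gives $d_{\tau_{I_n},t_{I_n,i}}\le\varepsilon_{t_{I_n,i}}$ and the output-step test a slack of order $\varepsilon_n$; since every $t_{I_n,i}$ lies in $C_{I_n,n}$ and the sequence $(\varepsilon_t)$ is taken non-increasing, $\varepsilon_n\le\varepsilon_{t_{I_n,i}}$ for each $i$, so $|R_n(h)-R_{t_{I_n,i}-1}(h)|\le 2\varepsilon_{t_{I_n,i}}$. Averaging over $i=1,\dots,s_{I_n,n}$ and using $C_{I_n,n}=\{t_{I_n,1},\dots,t_{I_n,s_{I_n,n}}\}$ turns each $2\varepsilon$ into $\tfrac{2}{s_{I_n,n}}\sum_{t\in C_{I_n,n}}\varepsilon_t$, so the two together yield exactly the $\tfrac{4}{s_{I_n,n}}\sum_{t\in C_{I_n,n}}\varepsilon_t$ of the statement.

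With that replacement made, everything else is inherited verbatim: the residual $\tfrac{1}{s_{I_n,n}}\sum_i R_{t_{I_n,i}-1}(h_{I_n,i})-\inf_h\tfrac{1}{s_{I_n,n}}\sum_i R_{t_{I_n,i}-1}(h)$ splits into $T_1$ (a deviation of empirical losses from conditional risks along the active subroutine's update times), $T_2=W_{I_n,n}$, and $T_3\le\tilde T_3$ (another such deviation); $T_1$ and $\tilde T_3$ are bounded on $E_{k,m}$ exactly as before by Lemma~\ref{lemma:technical}, whose statement involves only $\ell(h,\bz_t)-R_{t-1}(h)$ and is oblivious to the thresholds, so the right-hand side $\tfrac{4k\nN_\infty(\nL(\nH),\beta,n)}{(\alpha/2-4\beta)^2}e^{-\frac12 m(\alpha/2-4\beta)^2}+\PP{E_{k,m}^c}$ carries over unchanged. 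The only place where the argument is not pure bookkeeping is reconciling the test passed at the final step $n$ --- which controls $d_{n,\tau_{I_n}}$ rather than $d_{n+1,\tau_{I_n}}$ and carries threshold $\varepsilon_n$ --- with the average over update times; this is what forces $(\varepsilon_t)$ to be non-increasing, and it is handled as in Theorem~\ref{theorem:online-convex}, using that the creation time $\tau_{I_n}$ is itself in $C_{I_n,n}$ so its slack is already counted. The same substitution applied to the proofs of Theorems~\ref{theorem:meta-erm} and~\ref{theorem:online-nonconvex} gives the corresponding learnability variants.
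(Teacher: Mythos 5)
Your plan is correct and matches the paper's own (very terse) proof: the argument of Theorem~\ref{theorem:online-convex} is rerun unchanged except that each application of the selection-rule bound $\abs{R_n(h)-R_{t_{I_n,i}-1}(h)}\leq 2\varepsilon$ becomes $2\varepsilon_{t_{I_n,i}}$, which after averaging over $i$ yields the $\frac{4}{s_{I_n,n}}\sum_{t\in C_{I_n,n}}\varepsilon_t$ term while the concentration bounds from Lemma~\ref{lemma:technical} are untouched. Your explicit handling of the output-step threshold via monotonicity of $(\varepsilon_t)$ is a detail the paper glosses over, but it is the natural reading and does not change the route.
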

The same conditions as above ensure the learnability for this modified version of the meta-algorithm.
The described change also affects the number of subroutines started by MACRO.
Following the same argument as in Lemma 1, this number is bounded by $\nN(M, n, \varepsilon_n/2)$, thereby, implying a trade-off between the accuracy of the approximations and the number of subroutines, which is equivalent to the amount of computations we do at each turn.
\begin{proof}[Proof of Theorem 4]
The only part that changes in the proof is the first step when we approximate the risk at step $n$ by the average of risks.
Then, instead of $2\varepsilon$ we get $2 \frac{1}{s_{I_n,n}} \sum_{i=1}^{s_{I_n,n}} \varepsilon_{t_{I_n,i}}$.
\end{proof}

\paragraph{Experiments}
The way we compute distances between histories differ between the two studied settings.
For the Airports dataset, since the number of flights in each time stamp changes, we compute the feature-based distance as the approximate bottleneck distance:
to compare to sets of vectors, $S$ and $T$, we compute all pairwise $\ell_2$-distances between the elements of
$S$ and $T$, take the smallest $\max\lbrace \abs{S}, \abs{T} \rbrace$ ones and compute their average.
Denoting this approximate bottleneck distance as $\bar{D}_1$, the final distance between chunks 
is computed as
\begin{equation}
D_1(S, T) = \frac{1}{2}\bar{D}_1(S^0, T^0) + \frac{1}{2}\bar{D}_1(S^1, T^1),
\end{equation}
where $S^y$ and $T^y$ are the subsets of $S$ and $T$ with label~$y$.
For the Breakfast dataset, we fix a finite length history and compute the $\ell_2$-distance between the vectors on the same positions and take the average.
In the experiments, we used the distance of length 5, however, we tried out over values and found that the results are not very sensitive to the actual length.

For the second distance, $D_2$, we only make use of the labels of the points in the histories. 
For any history $S$, define $p(S)=(p_1,\dots, p_K)^T$ with $p_i$ being the fraction of the class $i$ in $S$ and $K$ being the number of classes.
Then we set $D_2(S, T) = \|p(S)-p(T)\|^2$. 
The theoretical analysis is oblivious to the fact how we initialize the subroutines.
In experiments, whenever we start a new subroutine, we give it a warm start by initializing it with the parameters of the closest subroutine in terms of discrepancies.
The results for MACRO with label-based distance on the Airports dataset are presented in Figure~\ref{fig:airports-label} and on the Breakfast dataset are in Figure~\ref{fig:breakfast-label}.

In addition to the plots, we proide the exact error-rates for all the presented experiments in Tables~\ref{fig:airports-bottleneck-nums}, \ref{fig:airports-label-nums}, \ref{fig:breakfast-bottleneck-nums} and \ref{fig:breakfast-label-nums}.

\begin{figure*}
    \centering
    \includegraphics[scale=0.3]{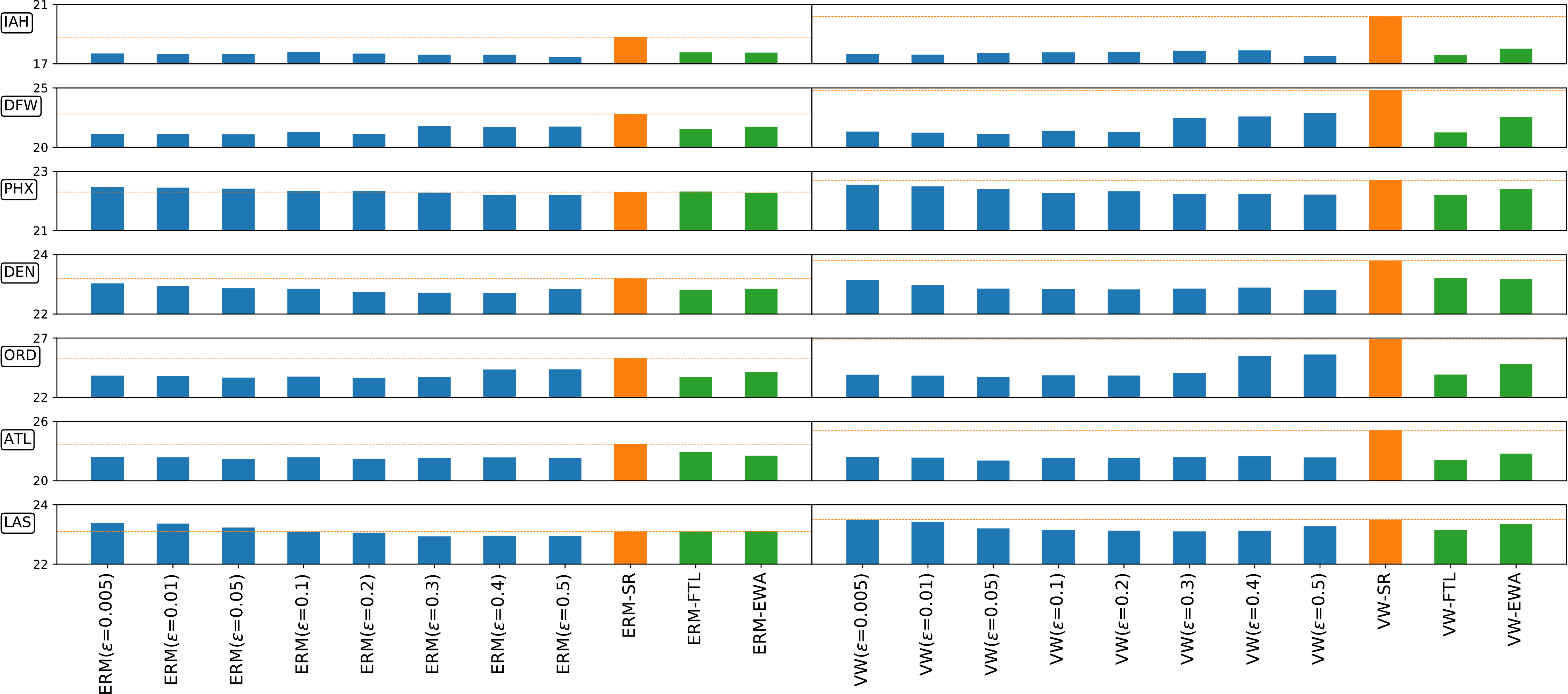}
    \caption{
Performance of MACRO with different subroutines on the DataExpo Airline dataset with the label-based distance function.
%
Each row of plots corresponds to a different airport labeled by its IATA code.
The y-axes shows error-rates in percents and the x-axes is labeled by the short name of a subroutine and a threshold used in MACRO.
ERM-FTL, VW-EWA, ERM-FTL and VW-EWA represent the online strategies to choose the threshold.
ERM-SR and VW-SR are the marginal versions of the subroutine algorithms and act as a baselines.
}
    \label{fig:airports-label}
\end{figure*}

\begin{figure*}
    \centering
    \includegraphics[scale=0.3]{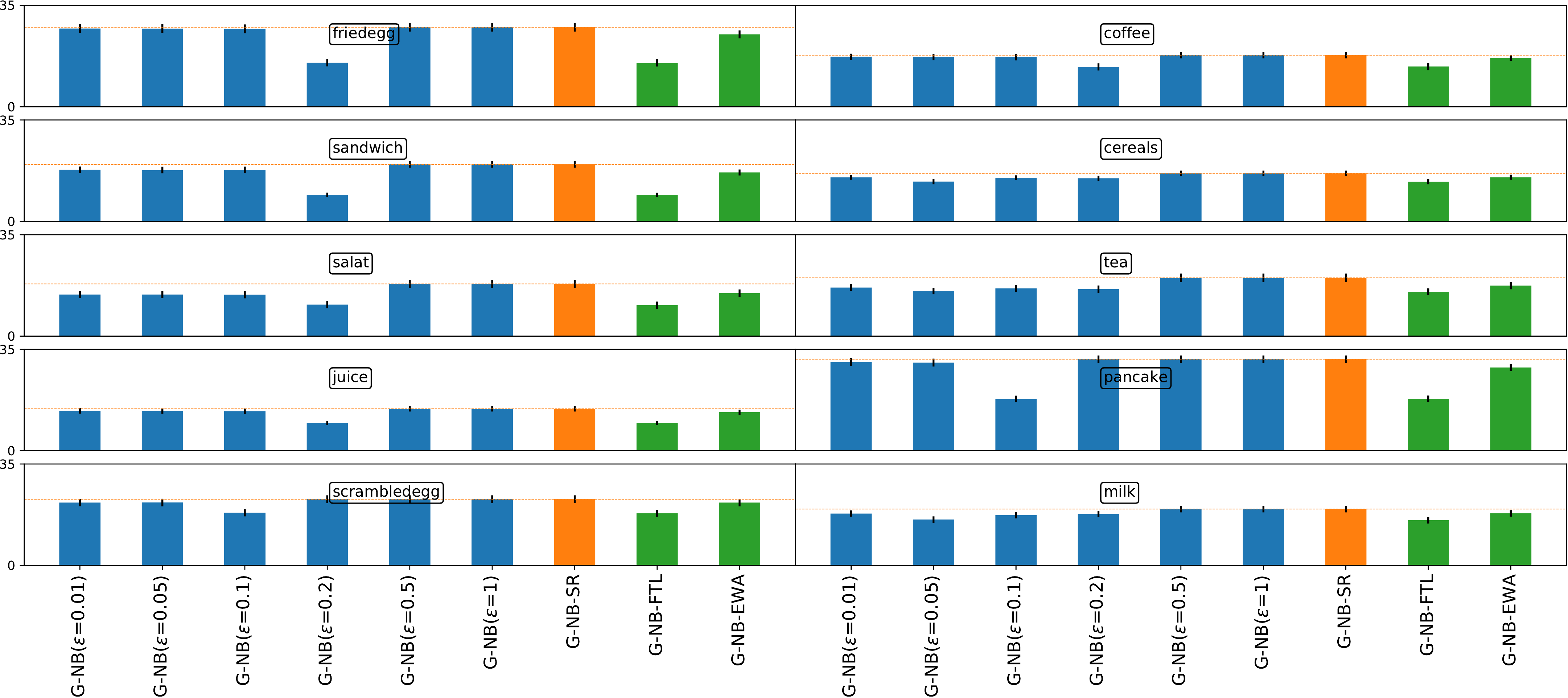}
    \caption{
Performance of MACRO with different subroutines on the Breakfast Actions dataset with the label-based distance function.
%
Each plot corresponds to different action. The y-axes shows error-rates in percents averaged over the persons performing each action. 
The x-axes is labeled by the short name of a subroutine and a threshold used in MACRO.
G-NB-FTL and G-NB-EWA represent the online strategies to choose the threshold.
G-NB-SR is the marginal versions of G-NB algorithm and acts as a baseline.
}
    \label{fig:breakfast-label}
\end{figure*}

\begin{table*}[t]
\scriptsize
\small
\centering
\setlength\tabcolsep{3pt}
\begin{tabular}{l|c|c|c|c|c|c|c}
& \rot{ATL} & \rot{ORD} & \rot{DEN} & \rot{PHX} & \rot{IAH} & \rot{LAS} & \rot{DFW} \\
\hline
ERM-SR & 23.7 & 25.3 & 23.2 & 22.3 & 18.8 & 23.1 & 22.8\\ \hline \hline
ERM-FTL & 22.4 & 23.7 & 22.6 & 22.2 & 17.6 & 23.0 & 21.3\\ \hline
ERM-EWA & 23.1 & 25.0 & 23.5 & 22.4 & 18.3 & 23.2 & 21.9\\ \hline \hline
ERM($\epsilon\!=\!0.15$)& 24.0 & 27.0 & 24.8 & 22.6 & 19.0 & 23.5 & 22.7\\ \hline
ERM($\epsilon\!=\!0.17$)& 23.9 & 26.4 & 24.3 & 22.5 & 18.9 & 23.6 & 22.4\\ \hline
ERM($\epsilon\!=\!0.19$)& 23.5 & 26.0 & 24.5 & 22.5 & 18.9 & 23.5 & 22.4\\ \hline
ERM($\epsilon\!=\!0.22$)& 23.2 & 25.9 & 23.8 & 22.5 & 18.9 & 23.5 & 21.9\\ \hline
ERM($\epsilon\!=\!0.25$)& 23.1 & 25.1 & 23.4 & 22.4 & 18.6 & 23.3 & 21.6\\ \hline
ERM($\epsilon\!=\!0.28$)& 23.0 & 24.3 & 23.1 & 22.4 & 18.2 & 23.2 & 21.4\\ \hline
ERM($\epsilon\!=\!0.31$)& 22.6 & 24.0 & 22.8 & 22.3 & 17.7 & 23.0 & 21.4\\ \hline
ERM($\epsilon\!=\!0.34$)& 22.8 & 23.7 & 22.6 & 22.2 & 17.6 & 22.9 & 21.3\\ \hline
ERM($\epsilon\!=\!0.37$)& 22.7 & 24.0 & 22.6 & 22.3 & 17.8 & 22.9 & 21.9\\ \hline
ERM($\epsilon\!=\!0.4$)& 22.4 & 24.3 & 22.6 & 22.2 & 17.8 & 22.9 & 22.0\\ \hline
ERM($\epsilon\!=\!0.45$)& 22.3 & 24.5 & 23.2 & 22.2 & 17.8 & 23.0 & 22.1\\ \hline
ERM($\epsilon\!=\!0.47$)& 22.3 & 23.9 & 23.2 & 22.2 & 17.9 & 23.1 & 22.1\\ 
\end{tabular}
\quad
\begin{tabular}{l|c|c|c|c|c|c|c}
& \rot{ATL} & \rot{ORD} & \rot{DEN} & \rot{PHX} & \rot{IAH} & \rot{LAS} & \rot{DFW} \\
\hline
VW-SR & 25.1 & 26.9 & 23.8 & 22.7 & 20.2 & 23.5 & 24.8\\ \hline \hline
VW-FTL & 22.5 & 24.0 & 22.8 & 22.3 & 17.8 & 23.1 & 21.6\\ \hline
VW-EWA & 23.1 & 25.3 & 23.9 & 22.8 & 18.5 & 23.5 & 22.7\\ \hline \hline
VW($\epsilon\!=\!0.15$)& 24.4 & 27.6 & 26.2 & 24.4 & 19.3 & 24.7 & 23.2\\ \hline
VW($\epsilon\!=\!0.17$)& 23.7 & 26.6 & 25.3 & 24.1 & 18.9 & 24.5 & 22.9\\ \hline
VW($\epsilon\!=\!0.19$)& 23.3 & 25.8 & 24.8 & 23.6 & 18.8 & 24.5 & 22.5\\ \hline
VW($\epsilon\!=\!0.22$)& 23.2 & 25.6 & 23.9 & 23.0 & 18.8 & 23.8 & 22.1\\ \hline
VW($\epsilon\!=\!0.25$)& 23.2 & 25.0 & 23.3 & 22.6 & 18.6 & 23.5 & 21.8\\ \hline
VW($\epsilon\!=\!0.28$)& 23.1 & 24.7 & 23.0 & 22.3 & 18.4 & 23.2 & 21.6\\ \hline
VW($\epsilon\!=\!0.31$)& 22.9 & 24.4 & 23.0 & 22.2 & 18.1 & 23.2 & 22.0\\ \hline
VW($\epsilon\!=\!0.34$)& 23.1 & 24.0 & 22.8 & 22.3 & 17.8 & 23.1 & 21.9\\ \hline
VW($\epsilon\!=\!0.37$)& 23.1 & 24.4 & 22.7 & 22.2 & 18.1 & 23.1 & 23.5\\ \hline
VW($\epsilon\!=\!0.4$)& 22.7 & 25.2 & 22.8 & 22.2 & 18.1 & 23.0 & 23.5\\ \hline
VW($\epsilon\!=\!0.45$)& 22.4 & 25.8 & 23.9 & 22.2 & 18.2 & 23.3 & 23.6\\ \hline
VW($\epsilon\!=\!0.47$)& 22.5 & 24.2 & 23.9 & 22.2 & 18.3 & 23.4 & 23.6\\ 
\end{tabular}
\caption{
Performance of MACRO with different subroutines on DataExpo Airline data with feature-based distance function. 
The columns correspond to different airports label by their IATA code.
The rows are labeled by the short name of a subroutine and a threshold used in MACRO.
ERM-FTL, VW-EWA, ERM-FTL and VW-EWA represent the online strategies to choose the threshold.
ERM-SR and VW-SR are the marginal versions of the subroutine algorithms and act as a baselines.
The numbers are error-rates in percents.
}
\label{fig:airports-bottleneck-nums}
\end{table*}
\begin{table*}[t]
\scriptsize
\small
\centering
\setlength\tabcolsep{3pt}
\begin{tabular}{l|c|c|c|c|c|c|c}
& \rot{ATL} & \rot{ORD} & \rot{DEN} & \rot{PHX} & \rot{IAH} & \rot{LAS} & \rot{DFW} \\
\hline
ERM-SR & 23.7 & 25.3 & 23.2 & 22.3 & 18.8 & 23.1 & 22.8\\ \hline \hline
ERM-FTL & 22.9 & 23.7 & 22.8 & 22.3 & 17.8 & 23.1 & 21.5\\ \hline
ERM-EWA & 22.5 & 24.2 & 22.9 & 22.3 & 17.8 & 23.1 & 21.7\\ \hline \hline
ERM($\epsilon\!=\!0.005$)& 22.4 & 23.8 & 23.0 & 22.5 & 17.7 & 23.4 & 21.1\\ \hline
ERM($\epsilon\!=\!0.01$)& 22.4 & 23.8 & 22.9 & 22.5 & 17.6 & 23.4 & 21.1\\ \hline
ERM($\epsilon\!=\!0.05$)& 22.2 & 23.7 & 22.9 & 22.4 & 17.7 & 23.2 & 21.1\\ \hline
ERM($\epsilon\!=\!0.1$)& 22.4 & 23.7 & 22.9 & 22.3 & 17.8 & 23.1 & 21.3\\ \hline
ERM($\epsilon\!=\!0.2$)& 22.2 & 23.7 & 22.7 & 22.3 & 17.7 & 23.1 & 21.1\\ \hline
ERM($\epsilon\!=\!0.3$)& 22.3 & 23.7 & 22.7 & 22.3 & 17.6 & 22.9 & 21.8\\ \hline
ERM($\epsilon\!=\!0.4$)& 22.4 & 24.4 & 22.7 & 22.2 & 17.6 & 23.0 & 21.7\\ \hline
ERM($\epsilon\!=\!0.5$)& 22.3 & 24.4 & 22.8 & 22.2 & 17.5 & 23.0 & 21.7\\ 
\end{tabular}
\quad
\begin{tabular}{l|c|c|c|c|c|c|c}
& \rot{ATL} & \rot{ORD} & \rot{DEN} & \rot{PHX} & \rot{IAH} & \rot{LAS} & \rot{DFW} \\
\hline
VW-SR & 25.1 & 26.9 & 23.8 & 22.7 & 20.2 & 23.5 & 24.8\\ \hline \hline
VW-FTL & 22.1 & 23.9 & 23.2 & 22.2 & 17.6 & 23.1 & 21.2\\ \hline
VW-EWA & 22.7 & 24.8 & 23.2 & 22.4 & 18.0 & 23.3 & 22.6\\ \hline \hline
VW($\epsilon\!=\!0.005$)& 22.4 & 23.9 & 23.1 & 22.5 & 17.7 & 23.5 & 21.3\\ \hline
VW($\epsilon\!=\!0.01$)& 22.3 & 23.8 & 23.0 & 22.5 & 17.6 & 23.4 & 21.2\\ \hline
VW($\epsilon\!=\!0.05$)& 22.0 & 23.7 & 22.9 & 22.4 & 17.7 & 23.2 & 21.1\\ \hline
VW($\epsilon\!=\!0.1$)& 22.3 & 23.9 & 22.8 & 22.3 & 17.8 & 23.2 & 21.4\\ \hline
VW($\epsilon\!=\!0.2$)& 22.3 & 23.8 & 22.8 & 22.3 & 17.8 & 23.1 & 21.3\\ \hline
VW($\epsilon\!=\!0.3$)& 22.4 & 24.1 & 22.9 & 22.2 & 17.9 & 23.1 & 22.5\\ \hline
VW($\epsilon\!=\!0.4$)& 22.5 & 25.5 & 22.9 & 22.2 & 17.9 & 23.1 & 22.6\\ \hline
VW($\epsilon\!=\!0.5$)& 22.4 & 25.6 & 22.8 & 22.2 & 17.5 & 23.3 & 22.9\\ 
\end{tabular}
\caption{
Performance of MACRO with different subroutines on DataExpo Airline data with label-based distance function. 
The columns correspond to different airports label by their IATA code.
The rows are labeled by the short name of a subroutine and a threshold used in MACRO.
ERM-FTL, VW-EWA, ERM-FTL and VW-EWA represent the online strategies to choose the threshold.
ERM-SR and VW-SR are the marginal versions of the subroutine algorithms and act as a baselines.
The numbers are error-rates in percents.
}
\label{fig:airports-label-nums}
\end{table*}
\begin{table*}[t]
\scriptsize
\small
\centering
\setlength\tabcolsep{3pt}
\begin{tabular}{l|c|c|c|c|c|c|c|c|c|c}
& \rot{friedegg} & \rot{coffee} & \rot{sandwich} & \rot{cereals} & \rot{salat} & \rot{tea} & \rot{juice} & \rot{pancake} & \rot{scrambledegg} & \rot{milk} \\
\hline
G-NB-SR & 27.5 & 17.8 & 19.7 & 16.6 & 18.0 & 20.1 & 14.5 & 31.6 & 22.9 & 19.4\\ \hline \hline
G-NB-FTL & 8.6 & 8.9 & 7.2 & 7.2 & 8.5 & 8.6 & 5.4 & 8.0 & 6.3 & 8.3\\ \hline
G-NB-EWA & 13.5 & 12.6 & 11.7 & 11.4 & 11.2 & 13.0 & 9.0 & 13.2 & 11.1 & 11.8\\ \hline \hline
G-NB($\epsilon\!=\!1$)& 21.6 & 16.2 & 18.0 & 16.0 & 14.6 & 18.2 & 13.7 & 23.7 & 20.4 & 16.2\\ \hline
G-NB($\epsilon\!=\!5$)& 20.3 & 15.9 & 17.6 & 15.2 & 13.2 & 16.3 & 12.7 & 20.4 & 18.0 & 15.2\\ \hline
G-NB($\epsilon\!=\!7$)& 18.4 & 15.5 & 15.5 & 14.5 & 12.4 & 14.6 & 12.5 & 18.0 & 15.2 & 14.2\\ \hline
G-NB($\epsilon\!=\!10$)& 13.3 & 12.7 & 11.7 & 12.1 & 10.9 & 13.6 & 10.3 & 12.1 & 10.6 & 11.2\\ \hline
G-NB($\epsilon\!=\!15$)& 9.0 & 10.4 & 7.9 & 7.9 & 9.8 & 10.2 & 6.8 & 8.1 & 6.8 & 8.8\\ \hline
G-NB($\epsilon\!=\!17$)& 9.5 & 10.1 & 8.3 & 8.3 & 9.8 & 10.4 & 5.6 & 9.6 & 7.3 & 9.3\\ \hline
G-NB($\epsilon\!=\!20$)& 12.4 & 12.8 & 10.5 & 11.1 & 11.6 & 11.9 & 6.5 & 13.2 & 9.9 & 11.7\\ \hline
G-NB($\epsilon\!=\!25$)& 18.2 & 15.3 & 13.9 & 13.8 & 14.6 & 16.5 & 10.3 & 21.4 & 14.7 & 15.4\\ \hline
G-NB($\epsilon\!=\!30$)& 23.0 & 16.6 & 16.7 & 15.3 & 16.3 & 18.8 & 12.9 & 27.1 & 18.5 & 17.5\\ \hline
G-NB($\epsilon\!=\!35$)& 26.0 & 17.4 & 18.8 & 16.0 & 17.2 & 19.5 & 14.0 & 29.2 & 21.3 & 18.9\\ \hline
G-NB($\epsilon\!=\!40$)& 26.6 & 17.6 & 19.4 & 16.4 & 17.7 & 20.1 & 14.4 & 30.7 & 22.3 & 19.2\\ 
\end{tabular}
\caption{
Performance of MACRO with different subroutines on Breakfast Actions data with feature-based distance function. 
The columns correspond to different airports label by their IATA code.
The rows are labeled by the short name of a subroutine and a threshold used in MACRO.
G-NB-FTL and G-NB-EWA represent the online strategies to choose the threshold.
G-NB-SR is the marginal versions of G-NB algorithm and acts as a baseline.
The numbers are error-rates in percents.
}
\label{fig:breakfast-bottleneck-nums}
\end{table*}
\begin{table*}[t]
\scriptsize
\small
\centering
\setlength\tabcolsep{3pt}
\begin{tabular}{l|c|c|c|c|c|c|c|c|c|c}
& \rot{friedegg} & \rot{coffee} & \rot{sandwich} & \rot{cereals} & \rot{pancake} & \rot{tea} & \rot{juice} & \rot{salat} & \rot{scrambledegg} & \rot{milk} \\
\hline
G-NB-SR & 27.5 & 17.8 & 19.7 & 16.6 & 31.6 & 20.1 & 14.5 & 18.0 & 22.9 & 19.4\\ \hline \hline
G-NB-FTL & 15.2 & 13.9 & 9.2 & 13.7 & 17.9 & 15.3 & 9.6 & 10.7 & 18.0 & 15.6\\ \hline
G-NB-EWA & 25.0 & 16.8 & 16.9 & 15.3 & 28.7 & 17.4 & 13.3 & 14.8 & 21.7 & 18.0\\ \hline \hline
G-NB($\epsilon\!=\!0.01$)& 27.0 & 17.3 & 17.8 & 15.2 & 30.6 & 16.8 & 13.8 & 14.3 & 21.7 & 17.9\\ \hline
G-NB($\epsilon\!=\!0.05$)& 27.0 & 17.2 & 17.7 & 13.7 & 30.4 & 15.5 & 13.7 & 14.3 & 21.7 & 15.8\\ \hline
G-NB($\epsilon\!=\!0.1$)& 26.9 & 17.1 & 17.8 & 15.1 & 17.9 & 16.4 & 13.7 & 14.3 & 18.2 & 17.4\\ \hline
G-NB($\epsilon\!=\!0.2$)& 15.2 & 13.8 & 9.2 & 14.9 & 31.6 & 16.2 & 9.6 & 10.9 & 22.9 & 17.7\\ \hline
G-NB($\epsilon\!=\!0.5$)& 27.5 & 17.8 & 19.7 & 16.6 & 31.6 & 20.1 & 14.5 & 18.0 & 22.9 & 19.4\\ \hline
G-NB($\epsilon\!=\!1$)& 27.5 & 17.8 & 19.7 & 16.6 & 31.6 & 20.1 & 14.5 & 18.0 & 22.9 & 19.4\\ 
\end{tabular}
\caption{
Performance of MACRO with different subroutines on Breakfast Actions data with label-based distance function. 
The columns correspond to different airports label by their IATA code.
The rows are labeled by the short name of a subroutine and a threshold used in MACRO.
G-NB-FTL and G-NB-EWA represent the online strategies to choose the threshold.
G-NB-SR is the marginal versions of G-NB algorithm and acts as a baseline.
The numbers are error-rates in percents.
}
\label{fig:breakfast-label-nums}
\end{table*}

\end{document}